\newtheorem{thm}{Theorem}
\newtheorem{prop}[thm]{Proposition}
\newtheorem{cor}[thm]{Corollary}
\definecolor{Gray}{gray}{0.875}
\definecolor{LightCyan}{rgb}{0.9,1,1}
\DeclareMathOperator{\simop}{sim}
\newcommand\blfootnote[1]{%
\begingroup
\renewcommand\thefootnote{}\footnote{#1}%
\addtocounter{footnote}{-1}%
\endgroup
}
\ificcvfinal\pagestyle{empty}\fi
\begin{document}
\title{Nearest Neighbor Guidance for Out-of-Distribution Detection}

\author{
Jaewoo Park$^{1,2*}$ \quad Yoon Gyo Jung$^{3*}$ \quad Andrew Beng Jin Teoh$^{1\dag}$ \\
$^1$Yonsei University \quad $^2$AiV Co. \quad $^3$Northeastern University
}

\maketitle
\ificcvfinal\thispagestyle{empty}\fi

\begin{abstract}
Detecting out-of-distribution (OOD) samples are crucial for machine learning models deployed in open-world environments. Classifier-based scores are a standard approach for OOD detection due to their fine-grained detection capability. However, these scores often suffer from overconfidence issues, misclassifying OOD samples distant from the in-distribution region. To address this challenge, we propose a method called Nearest Neighbor Guidance (NNGuide) that guides the classifier-based score to respect the boundary geometry of the data manifold. NNGuide reduces the overconfidence of OOD samples while preserving the fine-grained capability of the classifier-based score. We conduct extensive experiments on ImageNet OOD detection benchmarks under diverse settings, including a scenario where the ID data undergoes natural distribution shift. Our results demonstrate that NNGuide provides a significant performance improvement on the base detection scores, achieving state-of-the-art results on both AUROC, FPR95, and AUPR metrics. The code is given at \url{https://github.com/roomo7time/nnguide}.
\end{abstract}

\blfootnote{\noindent $^{*}$Equal contribution. $^{\dag}$ Corresponding author: Andrew Beng Jin Teoh}

\section{Introduction}
\label{sec:intro}

\begin{figure}[t]
\centering
\includegraphics[width=.825\linewidth]{./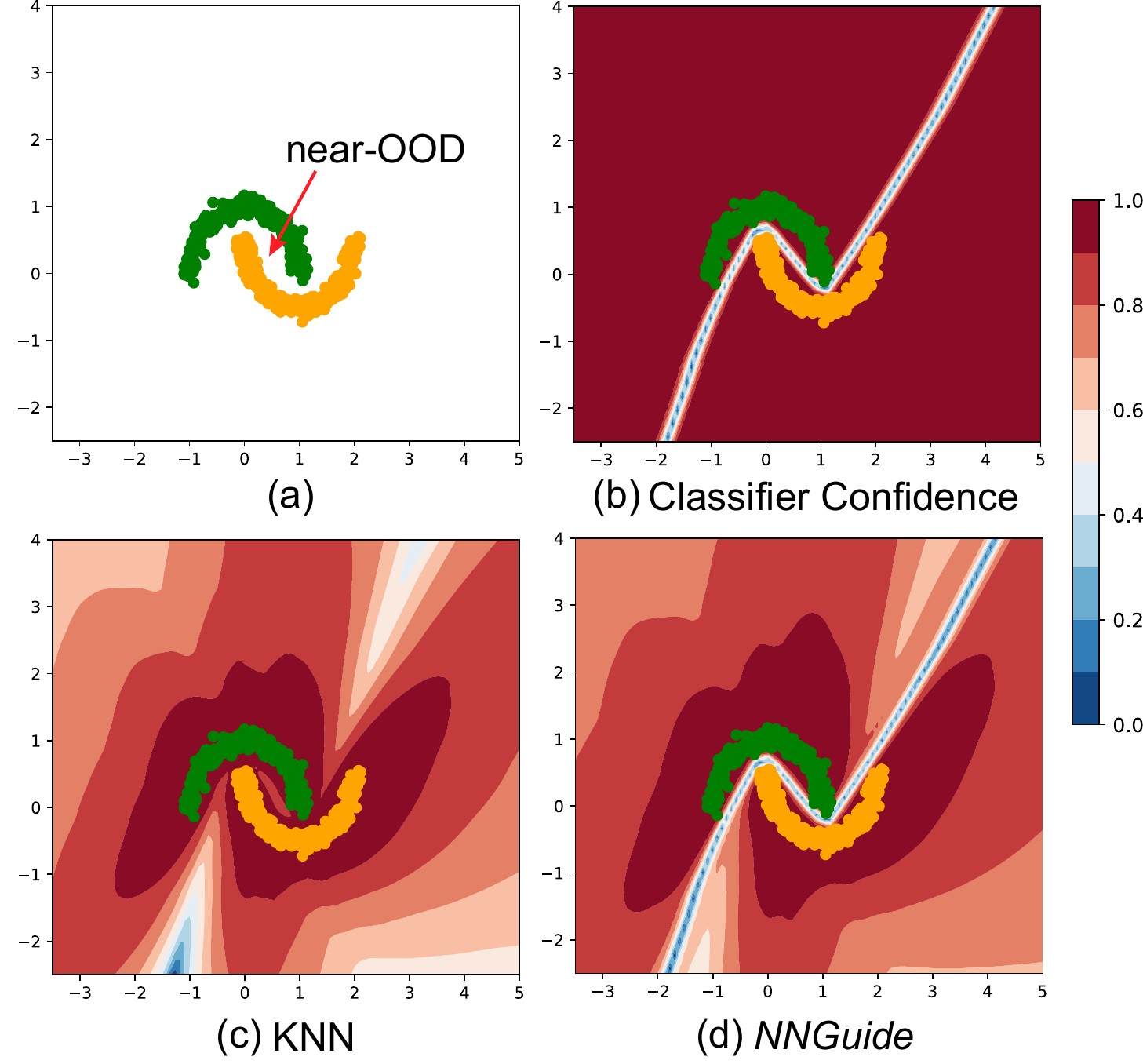}
\caption{
(a) Out-of-distribution (OOD) instances can occur in any white region, including the small area between the in-distribution (ID) classes. For instance, given 'cat' and 'dog' as ID classes, images of 'fox' will be OOD instances near the ID data. (b) The classifier-based detector (\ie confidence) assigns low scores on the small in-between area but suffers overconfidence issues. (c) Based on the distance information, KNN bounds the detection score on far-OOD regions. However, KNN lacks the fine-grained detection capability, and thus fails to detect the near-OOD. (d) NNGuide addresses both of these issues, reducing overconfidence in the far-OOD regions while achieving fine-grained detection.}
\label{fig:concept}
\end{figure}

The open-world environment poses a challenge for classification models as they may encounter input samples with unknown class labels, i.e., out-of-distribution (OOD) instances \cite{yang2022openood,pu2021anomaly,wu2021learning,bao2021evidential,chan2021entropy,hendrycks2021natural}. The detection of such anomalous examples is crucial for preventing classifier malfunctions and potential harm. As a result, in safety-critical applications like self-driving \cite{ren2021safety,chitta2021neat,vojir2021road,jung2021standardized} and biosynthesis \cite{zheng2022deep,vamathevan2019applications}, the OOD detection task plays a critical role in ensuring the dependable deployment of machine learning models. Therefore, a significant body of research has been dedicated to OOD detection \cite{yang2021generalized}.

The standard approach for OOD detection is to derive a score function from the trained network, such that the in-distribution (ID) samples exhibit relatively higher scores than OOD. One major paradigm in designing the detection score is to derive the score function based on the classifier's output signals, known as '\textit{confidence}'. Examples of classifier-based scores include maximum softmax probability \cite{hendrycks2016baseline} and energy function \cite{liu2020energy}. A major advantage of the classifier-based detection scores is their ability to fully utilize the class-dependent information of ID data and provide fine-grained detection capability. However, the classifier-based scores may suffer from overconfidence in far OOD samples, limiting their effectiveness \cite{fang2022out,hein2019relu}.

In contrast, distance-based approaches (\eg nearest neighbors \cite{sun2022out} and Mahalanobis distance \cite{lee2018simple,sehwag2021ssd}) detect OOD instances based on their distance to the ID data in the feature space. These approaches can certify low scores for far OOD regions but may not fully utilize class-dependent information, resulting in limited fine-grained detection capability. 


Our work introduces \textit{Nearest Neighbor Guidance} (NNGuide), a novel approach to improving classifier-based OOD detection scores and mitigating the issue of overconfidence. NNGuide achieves this by guiding the classifier confidence of a test input based on its similarity to its nearest neighbors in the ID bank set. As a result, NNGuide reduces the detection score in far-OOD regions while maintaining fine-grained detection capabilities.


In the toy experiment shown in Fig.~\ref{fig:concept}, the classifier-based score demonstrates its ability to assign low detection scores to samples located in the small intermediate region between the two ID classes, which is where near-OOD instances may occur. However, the score exhibits excessively high values in the outer region of the ID data. On the other hand, the distance-based score provided by KNN is effective at assigning low detection scores to far-OOD samples, but it fails to credit low scores on the small intermediate region, indicating a lack of fine-graininess. Our proposed detection score, NNGuide, mitigates the drawbacks of both methods; NNGuide assigns bounded low values for far-OOD samples while retaining fine-grained detection capability.

We conduct an extensive evaluation of NNGuide in the large-scale ImageNet-1k benchmark \cite{huang2021mos} across a variety of deep classification networks, achieving state-of-the-art results. 
Furthermore, we investigate the robustness of NNGuide by testing it on the ImageNet-1k-V2 dataset \cite{taori2020measuring,recht2019imagenet}, where the ID data undergoes natural distributional shifts. The presence of distribution shifts can lead to misidentifying ID samples as OOD and hence represents a challenging, realistic scenario.
The final part of our experiments involves an extensive ablation analysis, where we investigate the key contributing factors to the effectiveness of NNGuide, as well as its compatibility with a broad range of classifier-based scores.

\paragraph{Contributions}
The contributions of our work are summarized as follows:
\begin{itemize}
\item We propose a novel method called Nearest Neighbor Guidance (NNGuide) that guides the classifier-based detection score to reduce overconfidence in far-OOD regions while retaining its fine-grained detection capability.

\item 
We attain state-of-the-art results on the ImageNet-1k OOD detection benchmarks and demonstrate the robustness of NNGuide by considering a challenging and realistic scenario where the ID ImageNet data undergoes a natural distributional shift. 

\item
We provide an extensive and detailed ablation study, demonstrating the generality of NNGuide to a broad range of classifier-based scores.
\end{itemize}
We note that NNGuide is a post-hoc training-free inference method, and it is applicable to any standard deep classification networks.

\section{Related Works}
\label{sec:related}

The OOD detection research primarily falls into two categories: network truncation \cite{liang2017enhancing,sun2021react} and the design of a scalar score function to separate OOD instances from ID samples \cite{hendrycks2016baseline,liu2020energy}.

Network truncation \cite{liang2017enhancing,sun2021react} aims to increase the gap between ID and OOD samples by rectifying the propagated signals or weights of the network. For example, ODIN \cite{liang2017enhancing} perturbs the input signal using a gradient vector to increase the detection score, while ReAct \cite{sun2021react} clips the hidden layer activation signals using a threshold.
The clipped signals in ReAct are severely perturbed for OOD instances while being fairly retained for ID samples.
Other methods, such as DICE, RankFeat, and BATS, follow the same principle as ReAct but rectify other types of signals. 
DICE \cite{sun2022dice} sparsifies the classification layer by removing fewer contributing weights therein. RankFeat \cite{song2022rankfeat} subtracts the rank-1 approximation of the feature map from the initial feature map. BATS \cite{zhu2022boosting} cut-outs signals that deviate from the batch norm statistics.
The network truncation, however, cannot be used independently and need to be combined with a score function to detect OOD instances.


Another approach involves developing scalar score functions. 
These detection scores can be broadly categorized into two types: classifier-based and distance-based. The classifier-based scores, often referred to \textit{confidence}, leverage the classification layer of a neural network to derive the score. For example, \cite{hendrycks2016baseline} evaluated the effectiveness of the maximum output of softmax classifier probability (MSP). \cite{liu2020energy} proposed the energy function that can be viewed as a class conditional probability without bias. The maximum of logit \cite{vaze2021open,hendrycks2019scaling} captures both the class likelihood and the feature magnitude \cite{dhamija2018reducing}, and is shown to outperform the MSP counterpart. To utilize the class-dependent information extensively, \cite{hendrycks2019scaling} utilized the Kullback–Leibler (KL) divergence between the prediction and uniform distribution. GradNorm \cite{huang2021importance} on the other hand uses the norm of the gradient to minimize the KL score.

The other type of score-based approach is distance-based detectors. They identify an input sample as OOD based on its distance to the ID dataset in the feature space. One example is the Mahalanobis detector \cite{lee2018simple}, which measures the minimum distance to the class-wise means based on the shared data feature covariance.
A unified approach SSD \cite{sehwag2021ssd} operates on the same principle as Mahalanobis but instead assumes that the ID samples follow a single Gaussian distribution with a single mean. 
In contrast, KNN \cite{sun2022out} is non-parametric and therefore provides a more accurate representation of the distance to the boundary of the data manifold. CIDER \cite{ming2022exploit} shows that KNN particularly well fits to the networks with strong discriminative nature.

Classifier-based detectors exhibit low confidence scores on the class decision boundaries, and hence they are able to detect near-OOD instances around these boundaries.
However, the classifier confidence is cursed to be overly confident in the far-OOD region \cite{hein2019relu}.
Distance-based detectors on the other hand can certify low scores on the far-OOD regions.
Nevertheless, they may struggle to assign low scores to samples located in the intermediate regions around ID classes, failing to detect near-OOD instances. This limitation can be particularly problematic for parametric methods like Mahalanobis and SSD when the modeled distribution does not align well with the true data manifold.

\section{Preliminaries}
The out-of-distribution (OOD) detection is formulated as follows: Let $\mathcal{X}$ denote the input space with the output space $\mathcal{Y} = \{1, \dots, K\}$, where $K$ is the number of classes in the in-distribution (ID) dataset. Let $f: \mathcal{X} \to \mathbb{R}^{\lvert\mathcal{Y}\rvert}$ be a neural network that outputs classification logits. 
The objective of OOD detection is to devise a detection score function $S$ that determines whether a given test input $\mathbf{x} \in \mathcal{X}$ belongs to ID or OOD based on the score value $S(\mathbf{x})$:
\begin{equation}
\mathbf{x} \in 
\begin{cases}
\text{ID} &  \text{if} \quad S(\mathbf{x}) \geq \tau \\
\text{OOD} & \text{if} \quad S(\mathbf{x}) < \tau
\end{cases}
\end{equation}
The score function $S$ is either derived from the classifier outputs $f(\mathbf{x})$ or by computing the distances to the hidden layer features $\phi(\mathbf{x})$ of the network $f$.



\paragraph{Terminology}
For brevity, we call the classifier-based detection score 'confidence'.

\begin{algorithm}[t]
\caption{NNGuide Pseudocode, PyTorch-like}
\label{alg:code}
\definecolor{codeblue}{rgb}{0.25,0.5,0.5}
\definecolor{codekw}{rgb}{0.85, 0.18, 0.50}
\lstset{
  backgroundcolor=\color{white},
  basicstyle=\fontsize{7.5pt}{7.5pt}\ttfamily\selectfont,
  columns=fullflexible,
  breaklines=true,
  captionpos=b,
  commentstyle=\fontsize{7.5pt}{7.5pt}\color{codeblue},
  keywordstyle=\fontsize{7.5pt}{7.5pt}\color{codekw},
}
\begin{lstlisting}[language=python]
def nnguide(z, s, Z, S, k):
    # z: 1-by-d array of a test sample feature
    # s: a scalar base confidence of the test sample
    # Z: n-by-d array of d-dimensional features of n samples from the bank set
    # S: n-by-1 array of base confidences of n samples from the bank set
    # k: the number of nearest neighbors
    Z = normalize(Z, dim=1) 
    z = normalize(z, dim=1)

    g_topk, _ = matmul(z, (S*Z)).topk(k, dim=1)
    g = g_topk.mean(dim=1)  # the guidance term
    
    return s*g  # the guided score for the test sample
\end{lstlisting}
\end{algorithm}

\section{Method}
\label{sec:method}

\subsection{Proposed method: NNGuide}
Let $S_{base}: \mathcal{X} \to [0, \infty)$ be a given base confidence score function. We guide this base confidence by 
\begin{equation}
\label{eq:nnguide}
S_{NNGuide}(\mathbf{x}) = S_{base}(\mathbf{x}) \cdot G(\mathbf{x}).
\end{equation}
The \textit{guidance term} $G(\mathbf{x})$ is derived from the ID nearest neighbors. Particularly, let $\{\mathbf{z}_1, \dots, \mathbf{z}_n\}$ be a small bank set where $n$ is a $\alpha \%$ of features randomly sampled from the train set. $\mathbf{z}_i = \phi(\mathbf{x}_i)$ is the feature computed from a bank set sample $\mathbf{x}_i$ by the penultimate layer of the network $f$. Let $s_i = S_{base}(\mathbf{x}_i)$ denote the confidence scores of $\mathbf{x}_i$. Then for a test input $\mathbf{x}$, the guidance term $G(\mathbf{x})$ is given by the average similarity to the $k$-nearest neighbors in the bank set
\begin{equation}
\label{eq:sim_ensemble}
G(\mathbf{x}) = \frac{1}{k} \sum_{i=1}^k s_{(i)} \simop ( \mathbf{z}_{(i)}, \mathbf{z})
\end{equation}
where $\mathbf{z} = \phi(\mathbf{x})$ is the test input feature, and $\simop$ is the cosine similarity. The reordered index $(i)$ is given in the descending order of \textit{confidence-scaled} nearest neighbor similarities
\begin{equation}
\label{eq:conf_scale}
s_{(1)} \simop ( \mathbf{z}_{(1)}, \mathbf{z}) \geq \cdots \geq s_{(n)} \simop ( \mathbf{z}_{(n)}, \mathbf{z}).
\end{equation}
Due to the confidence scale term $s_i = S_{base}(\mathbf{x}_i)$, the nearest neighbors are \textit{selected in the high-confidence region}. This can enhance the utilization of more salient ID features while reducing the effect of possible outliers in ID. Overall, the confidence-scaled search makes the guided score more robust than the conventional KNN. The Pytorch-like pseudo algorithm is given in Algorithm~\ref{alg:code}.

Unless specified otherwise, we use the (negative) energy  function $S_{base}(\mathbf{x}) = - \text{Energy}(\mathbf{x})$ as the default base confidence score  due to its generality \cite{liu2020energy}.

\paragraph{Theoretical Understanding}
Let $S = S_{NNguide}$, $\widehat{\phi}(\mathbf{x}) = \phi(\mathbf{x})/\lVert \phi(\mathbf{x}) \rVert_2$, $\widehat{\mathbf{z}} = \mathbf{z} / \lVert \mathbf{z} \rVert_2$.



\begin{prop}
\label{thm:theory}
If $\min_i \lVert \widehat{\phi}(\mathbf{x}) - \widehat{\mathbf{z}}_i\rVert_2 \geq 2$, then $S(\mathbf{x}) \leq 0$. If $\lVert \widehat{\phi}(\mathbf{x}) - \widehat{\mathbf{z}}_{(k)}\rVert_2 < \epsilon$, then $S(\mathbf{x}) > M (S_{base}(\mathbf{x}) - \epsilon/2)$ if $\min_{i \leq k} s_{(i)} > M$, and $S(\mathbf{x}) \leq \delta S_{base}(\mathbf{x})$ if $\max_{i \leq k} s_{(i)} \leq \delta$.
\end{prop}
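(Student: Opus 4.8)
The plan is to collapse all three assertions onto a single elementary identity relating cosine similarity to the Euclidean distance of the normalized features, after which each claim follows from sign and monotonicity bookkeeping. Since $\widehat{\phi}(\mathbf{x})$ and every $\widehat{\mathbf{z}}_i$ are unit vectors, I would first record
\[
\simop(\mathbf{z}_i,\mathbf{z}) = \widehat{\phi}(\mathbf{x})\cdot\widehat{\mathbf{z}}_i = 1 - \tfrac{1}{2}\lVert \widehat{\phi}(\mathbf{x}) - \widehat{\mathbf{z}}_i\rVert_2^2,
\]
so that writing $d_i := \lVert \widehat{\phi}(\mathbf{x}) - \widehat{\mathbf{z}}_i\rVert_2$ turns every hypothesis about distances into one about similarities via $\simop = 1 - d_i^2/2$. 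I would also flag the two structural facts used throughout: $S_{base}\geq 0$ by assumption, so $s_i = S_{base}(\mathbf{x}_i)\geq 0$ and $S=S_{base}\cdot G$ has the same sign as $G$; and $\simop\leq 1$ always.

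For the first assertion, $\min_i d_i\geq 2$ combined with the universal bound $d_i\leq 2$ for unit vectors forces $d_i=2$, hence $\simop(\mathbf{z}_i,\mathbf{z})=-1\leq 0$ for every $i$; since each $s_{(i)}\geq 0$, every summand $s_{(i)}\simop(\mathbf{z}_{(i)},\mathbf{z})$ is $\leq 0$, so $G(\mathbf{x})\leq 0$ and therefore $S(\mathbf{x})=S_{base}(\mathbf{x})G(\mathbf{x})\leq 0$. (I would note in passing that the weaker hypothesis $d_i\geq\sqrt{2}$ already yields $\simop\leq 0$ and the same conclusion.) The third assertion is the mirror image: from $\simop\leq 1$ and $s_{(i)}\geq 0$ each summand satisfies $s_{(i)}\simop(\mathbf{z}_{(i)},\mathbf{z})\leq s_{(i)}\leq\max_{i\leq k}s_{(i)}\leq\delta$, whence $G(\mathbf{x})\leq\delta$ and $S(\mathbf{x})=S_{base}(\mathbf{x})G(\mathbf{x})\leq\delta\,S_{base}(\mathbf{x})$.

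The middle lower bound is the only one that uses the reordering nontrivially, and is where I expect the real work. The key step is that, because $(1),\dots,(k)$ index the $k$ largest confidence-scaled similarities, each of the $k$ averaged terms is at least the smallest one, giving the aggregate inequality $G(\mathbf{x})\geq s_{(k)}\simop(\mathbf{z}_{(k)},\mathbf{z})$ \emph{without} having to control each neighbor individually. I would then bound the two factors separately: $s_{(k)}\geq\min_{i\leq k}s_{(i)}>M$, and $\simop(\mathbf{z}_{(k)},\mathbf{z})=1-d_{(k)}^2/2>1-\epsilon^2/2$ from $d_{(k)}<\epsilon$, which for $\epsilon\leq 1$ is $\geq 1-\epsilon/2>0$. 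Multiplying (both factors positive) yields $G(\mathbf{x})>M(1-\epsilon/2)$, and hence $S(\mathbf{x})>S_{base}(\mathbf{x})\,M(1-\epsilon/2)$, from which the stated form $M\bigl(S_{base}(\mathbf{x})-\epsilon/2\bigr)$ is recovered by rearrangement.

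The main obstacle is twofold, and is essentially bookkeeping about the nonstandard ordering and the exact constant. First, the reordering $(i)$ is by confidence-\emph{scaled} similarity, not by raw distance, so the hypothesis "the $k$-th ranked neighbor lies within $\epsilon$" cannot be pushed onto each individual $\simop(\mathbf{z}_{(i)},\mathbf{z})$; I would route everything through the single inequality $G\geq s_{(k)}\simop(\mathbf{z}_{(k)},\mathbf{z})$ and be careful that the ordering in the displayed confidence-scaled chain is exactly the one used to form $G$. Second, matching the literal constant $\epsilon/2$ is delicate: the identity only produces $\epsilon^2/2$, which I upgrade to $\epsilon/2$ using $\epsilon\leq 1$, and passing from $M\,S_{base}(\mathbf{x})(1-\epsilon/2)$ to $M\bigl(S_{base}(\mathbf{x})-\epsilon/2\bigr)$ tacitly treats the lower-order term $M\,S_{base}(\mathbf{x})\,\epsilon/2$ under the normalization $S_{base}\leq 1$; I would make these small-$\epsilon$/normalization conventions explicit rather than leave them implicit.
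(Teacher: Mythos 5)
Your proof is correct and shares the paper's skeleton (convert distances between unit vectors to cosine similarities, then do sign and monotonicity bookkeeping for each claim), but it diverges in one step in a way that is worth recording. For the middle lower bound the paper assumes the stronger hypothesis $\max_{i\leq k}\lVert\widehat{\phi}(\mathbf{x})-\widehat{\mathbf{z}}_{(i)}\rVert_2<\epsilon$ (the form used in the supplementary restatement) and bounds \emph{every} summand below by $s_{(i)}(1-\epsilon/2)$ before averaging; you instead exploit the confidence-scaled ordering to get $G(\mathbf{x})\geq s_{(k)}\simop(\mathbf{z}_{(k)},\mathbf{z})$ and bound only that single term. Your route proves the bound under the literal main-text hypothesis, which constrains only the $k$-th ranked neighbor, and thereby resolves the mismatch between the two statements of the condition --- a real improvement, since the ordering is by scaled similarity and smallness of $d_{(k)}$ does not propagate to the other neighbors. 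The remaining differences are matters of constants rather than substance: you use the correct identity $\lVert\widehat{\phi}(\mathbf{x})-\widehat{\mathbf{z}}_i\rVert_2^2=2-2\simop(\mathbf{z}_i,\mathbf{z})$, whereas the paper's Eq.~\eqref{eq:dist2sim} omits the square and so obtains $1-\epsilon/2$ directly where the hypothesis really yields $1-\epsilon^2/2$; your patch via $\epsilon\leq1$ is the honest fix. You are also right to flag that passing from $MS_{base}(\mathbf{x})(1-\epsilon/2)$ to the stated $M(S_{base}(\mathbf{x})-\epsilon/2)$ requires $M\geq0$ and $S_{base}(\mathbf{x})\leq1$; the paper's own proof stops at $MS_{base}(\mathbf{x})(1-\epsilon/2)$ and leaves that step implicit, so this is a gap in the statement's bookkeeping, not in your argument. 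Parts (a) and (c) of your proof coincide with the paper's.
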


The proposition states that if a test sample $\mathbf{x}$ is far-distanced from the ID bank set in the feature space, then the guided score is certified to be low. On the other hand, if $\mathbf{x}$ is near to the ID, then the guidance up-scales the base confidence in the high-confidence region, while relatively down-scaling the base confidence around the low-confidence region (\eg class decision boundaries).
Thus, near the ID region, the guidance either retains or improves the fine-grained detection capability.

\begin{table*}
\centering
\resizebox{.995\linewidth}{!}{
\begin{tabular}{llll|lll|lll|lll}
\toprule 
Training scheme & \multicolumn{6}{c}{From scratch} & \multicolumn{6}{c}{Transfer learning} \\ 
Model & \multicolumn{3}{c}{ResNet-50} & \multicolumn{3}{c}{MobileNet} & \multicolumn{3}{c}{ViT-B/16} & \multicolumn{3}{c}{RegNet-Y/16GF} \\ 
ID accuracy$\uparrow$ & \multicolumn{3}{c}{78.73} & \multicolumn{3}{c}{72.15} & \multicolumn{3}{c}{85.3} & \multicolumn{3}{c}{86.01} \\ 
Detection method & FPR95$\downarrow$ & AUROC$\uparrow$ & AUPR$\uparrow$ & FPR95$\downarrow$ & AUROC$\uparrow$ & AUPR$\uparrow$ & FPR95$\downarrow$ & AUROC$\uparrow$ & AUPR$\uparrow$ & FPR95$\downarrow$ & AUROC$\uparrow$ & AUPR$\uparrow$ \\
\midrule
MSP (ICLR'17\cite{hendrycks2016baseline}) & 49.54 & 87.44 & 96.71 & 77.04 & 79.46 & 94.65 & 48.74 & 87.67 & 96.94 & 43.37 & 88.95 & 97.30 \\ 
MaxLogit (ICML'22\cite{hendrycks2019scaling}) & 42.12 & 90.49 & 97.57 & 78.06 & 77.99 & 94.10 & 37.62 & 89.29 & 97.11 & 26.00 & 92.91 & 98.16 \\ 
KL (ICML'22\cite{hendrycks2019scaling}) & 40.01 & 90.80 & 97.63 & 88.70 & 68.85 & 91.56 & 38.44 & 89.01 & 97.03 & 24.74 & 93.07 & 98.18 \\ 
ViM (CVPR'22\cite{wang2022vim}) & 29.86 & \textbf{93.00} & \textbf{98.13} & 76.27 & 73.59 & 92.53 & 35.16 & 91.04 & 97.73 & 21.39 & 94.89 & 98.74 \\ 
Mahalanobis (NeurIPS'22\cite{lee2018simple}) & 44.58 & 90.93 & 97.76 & \textbf{64.87} & 80.29 & 94.56 & 39.34 & 91.73 & 98.08 & 32.15 & 93.07 & 98.36 \\ 
SSD (ICLR'21\cite{sehwag2021ssd}) & 40.94 & 91.47 & 97.82 & 77.78 & 69.01 & 90.18 & 59.74 & 80.38 & 94.74 & 40.64 & 90.22 & 97.57 \\ 
GradNorm (NeurIPS'22\cite{huang2021importance}) & 28.92 & 93.00 & 98.09 & 89.88 & 65.28 & 89.88 & 38.04 & 89.28 & 96.91 & 82.86 & 62.98 & 87.76 \\ 
KNN (ICML'22\cite{sun2022out}) & 42.73 & 90.19 & 97.44 & 74.24 & 75.22 & 93.14 & 54.45 & 87.62 & 96.93 & 31.26 & 91.96 & 97.91 \\ 
Energy (NeurIPS\cite{liu2020energy}) & 40.01 & 90.80 & 97.63 & 88.70 & 68.85 & 91.56 & 38.44 & 89.01 & 97.03 & 24.73 & 93.08 & 98.18 \\ 
\rowcolor{Gray}
NNGuide (Ours) & \textbf{27.81} & 92.89 & 98.03 & 65.92 & \textbf{81.26} & \textbf{94.94} & \textbf{34.20} & \textbf{92.14} & \textbf{98.10} & \textbf{16.53} & \textbf{95.89} & \textbf{98.98} \\
\bottomrule
\end{tabular}
}
\caption{
Results on ImageNet-1k. We report the average performance across five different OODs (\ie iNaturalist, SUN, Places, Textures, OpenImage-O).
}
\label{table:result_in1k}
\end{table*}

\begin{table}[!ht]
\centering
\resizebox{.995\linewidth}{!}{
\begin{tabular}{llllll}
\toprule
Detection method & Backbone & Venue & FPR95 & AUROC & AUPR \\ 
\midrule
ODIN* & ResNet-50 & ICLR'18 & 56.48 & 85.41 & - \\ 
GODIN* & ResNet-50 & CVPR'20 & 66.07 & 82.02 & - \\ 
DICE* & ResNet-50 & ECCV'22 & 34.75 & 90.77 & - \\ 
ReAct + DICE* & ResNet-50 & ECCV'22 & 27.25 & 93.40
 & - \\ 
RankFeat* & ResNet-101 & NeurIPS'22 & 36.80 & 92.15 & - \\ 
BATS* & ResNet-50 & NeurIPS'22 & 27.11 & 94.28 & - \\ 
ASH* & ResNet-50 & Arxiv'22 & 22.73 & 95.06 & - \\ 
ReAct (+ Energy)* & ResNet-50 & NeurIPS'21 & 31.43 & 92.95 & - \\ 
ReAct + MSP & ResNet-50 & reproduced & 55.72 & 87.27 & 97.25 \\ 
ReAct + MaxLogit & ResNet-50 & reproduced & 39.97 & 91.80 & 98.29 \\ 
ReAct + KL & ResNet-50 & reproduced & 32.69 & 93.07 & 98.54 \\ 
ReAct + ViM & ResNet-50 & reproduced & 26.06 & 94.83 & 98.85 \\ 
ReAct + Mahalanobis & ResNet-50 & reproduced & 47.90 & 88.27 & 97.26 \\ 
ReAct + SSD & ResNet-50 & reproduced & 56.17 & 83.77 & 95.91 \\ 
ReAct + GradNorm & ResNet-50 & reproduced & 25.13 & 94.22 & 98.72 \\ 
ReAct + KNN & ResNet-50 & reproduced & 42.42 & 89.46 & 97.46 \\ 
ReAct + Energy & ResNet-50 & reproduced & 32.69 & 93.07 & 98.54 \\ 
\rowcolor{Gray}
ReAct + NNGuide & ResNet-50 & reproduced & \textbf{19.72} & \textbf{95.45} & \textbf{98.98} \\ 
\bottomrule
\end{tabular}
}
\caption{
The comparison with the state-of-the-art network truncation methods. The average performance across four different OODs (\ie iNaturalist, SUN, Places, and Textures) is reported. * indicates that the results are taken from the references.
}
\label{table:result_in1k_react}
\end{table}

\begin{table*}[t]
\centering
\resizebox{.9\linewidth}{!}{
\begin{tabular}{llll|lll|lll|lll}
\toprule
Training scheme& \multicolumn{6}{c}{From scratch} & \multicolumn{6}{c}{Transfer learning} \\ 
Model & \multicolumn{3}{c}{ResNet-50} & \multicolumn{3}{c}{MobileNet} & \multicolumn{3}{c}{ViT} & \multicolumn{3}{c}{RegNet} \\ 
ID accuracy$\uparrow$ & \multicolumn{3}{c}{74.43} & \multicolumn{3}{c}{67.78} & \multicolumn{3}{c}{81.16} & \multicolumn{3}{c}{82.79} \\ 
Detection score & FPR95$\downarrow$ & AUROC$\uparrow$ & AUPR$\uparrow$ & FPR95$\downarrow$ & AUROC$\uparrow$ & AUPR$\uparrow$ & FPR95$\downarrow$ & AUROC$\uparrow$ & AUPR$\uparrow$ & FPR95$\downarrow$ & AUROC$\uparrow$ & AUPR$\uparrow$ \\ 
\midrule
MSP & 55.54 & 85.30 & 84.57 & 79.40 & 76.64 & 77.14 & 54.61 & 84.93 & 84.74 & 48.99 & 86.38 & 85.93 \\ 
MaxLogit & 47.97 & 88.54 & 88.01 & 80.19 & 75.17 & 75.06 & 45.97 & 86.32 & 84.75 & 34.17 & 89.67 & 87.97 \\ 
KL & 44.67 & 88.97 & 88.24 & 89.57 & 66.17 & 67.92 & 47.03 & 85.91 & 84.41 & 32.69 & 89.74 & 87.87 \\ 
ViM & 33.11 & \textbf{91.88} & \textbf{90.75} & 76.90 & 72.32 & 72.16 & \textbf{39.13} & 89.39 & 88.74 & 28.02 & 92.80 & 92.09 \\ 
Mahalanobis & 47.85 & 89.45 & 89.65 & \textbf{65.27} & 79.37 & 78.87 & 42.99 & \textbf{90.29} & \textbf{90.76} & 36.58 & 91.72 & 91.87 \\ 
SSD & 43.21 & 90.36 & 90.09 & 77.24 & 69.32 & 67.66 & 58.28 & 81.21 & 81.39 & 44.19 & 88.73 & 88.65 \\ 
GradNorm & 32.66 & 91.87 & 90.62 & 89.98 & 64.50 & 65.27 & 47.62 & 86.07 & 83.75 & 84.47 & 61.02 & 59.56 \\ 
KNN & 44.57 & 89.22 & 88.64 & 75.49 & 74.33 & 75.20 & 57.98 & 86.52 & 86.73 & 33.74 & 91.34 & 90.54 \\ 
Energy & 44.68 & 88.97 & 88.24 & 89.57 & 66.17 & 67.92 & 47.03 & 85.91 & 84.41 & 32.86 & 89.75 & 87.89 \\ 
\rowcolor{Gray}
NNGuide & \textbf{30.78} & 91.70 & 90.08 & 67.80 & \textbf{79.40} & \textbf{78.93} & 41.73 & 90.08 & 89.95 & \textbf{21.97} & \textbf{94.17} & \textbf{93.44} \\
\bottomrule
\end{tabular}
}
\caption{
Results on ImageNet-1k-V2. The average performance across five different OODs is reported.
}
\label{table:result_in1kv2}
\end{table*}

\section{Experiments}
\label{sec:exp}
Our experiments on NNGuide are divided into the following parts: (1) We evaluate the performance of NNGuide on the standard ImageNet-1k OOD detection benchmark. (2) We examine the robustness of NNGuide against distribution shift.
In this setting, the train ID data is ImageNet-1k while the test ID data is ImageNet-1k-V2 which comprises natural distribution shift examples. (3) NNGuide is evaluated on the small-scale CIFAR-100 \cite{krizhevsky2009learning} benchmark. (4) We conduct a thorough ablation study on NNGuide to identify its key components, assess its compatibility with other classifier-based scores, and determine the optimal conditions for its use. 
\textbf{Supplementary Sec.~\ref{asec:exp} provides complete experimental results.}

\paragraph{Configuration}
NNGuide involves two hyperparameters related to the $k$-nearest neighbor search, \ie the number $k$ of nearest neighbors, and sampling ratio $\alpha \%$ to construct the bank set from the train data. In all evaluations below, we follow the guideline of \cite{sun2022out}, and use $\alpha {=}1\%$ and $k {=} 10$ to keep the balance between efficiency and performance. Extensive analysis of the hyperparameters is deferred to the ablation study in Sec.~\ref{sec:exp_ablation_hyperparameters}.

\paragraph{Comments on the computation speed}
The computational speed of nearest neighbor search has been extensively analyzed in \cite{sun2022out} for OOD detection. \cite{sun2022out} reports that KNN is as fast as or faster than most of the other detection methods in modern hardware and optimized libraries (\eg faiss). NNGuide adds no computation overhead on the nearest neighbor search algorithm.


\paragraph{Evaluation metrics}
We evaluate OOD detection methods by the widely-used metrics: the false positive rate (FPR95) when the true positive rate of ID samples is at 95\%,  the area under the receiver operating characteristic curve (AUROC), and the area under the precision-recall curve (AUPR). In all the metrics, we regard the ID samples as positive. 
In addition, we report the closed-set classification accuracy of the model on the ID dataset.

\subsection{Evaluation on the ImageNet-1k}
\label{sec:exp_in1k}

\paragraph{Datasets}
In this evaluation, the train and test ID sets are all from ImageNet-1k \cite{deng2009imagenet}. For extensiveness, the detection method is evaluated on a diverse set of OOD datasets \cite{huang2021mos}: iNaturalist \cite{van2018inaturalist}, SUN \cite{xiao2010sun}, Places \cite{zhou2017places}, Textures \cite{cimpoi2014describing}, and OpenImage-O \cite{wang2022vim}. The OOD sets have no overlapping categories with ImageNet-1k. Though there is no strict criterion to differentiate between near-OOD and far-OOD \cite{fang2022out}, \cite{yang2022openood} categorizes iNaturalist and OpenImage-O as near-OOD and Textures as far-OOD. The other two OOD sets SUN and Places have overlapping characteristics. Overall, the five OOD sets involve diverse class semantics \cite{huang2021mos}. Hence, the average performance over these OOD sets indicates the robustness of the detection method against general OOD. 

\paragraph{Backbone models}
We evaluate our proposed detection score NNGuide across four different model architectures ResNet-50 \cite{he2016deep}, MobileNet \cite{sandler2018mobilenetv2}, ViT \cite{dosovitskiy2020image}, and RegNet \cite{radosavovic2020designing}. ResNet-50 is a standard architecture for OOD evaluation, while MobileNet is a network designed particularly for efficiency. ViT  partitions an image into multiple visual tokens and processes them by a deep stack of multi-head attention blocks, whose usage has shown excellent performance in language modeling. Unlike ViT, the RegNet architecture is targeted for both efficiency and performance. RegNet is constructed by applying a network search principle at the network-population level rather than a network level, making it robust across diverse environments including distribution shifts and domain generalization \cite{cha2022domain,arpit2021ensemble}.

All four models are trained on the training fold of ImageNet-1k, and the classification layers are strictly prohibited to see any instance from OOD datasets. The first two, ResNet-50 and MobileNet, are trained from scratch on the train set. The latter two, ViT and RegNet, on the other hand, are initialized from the pretrained weights on ImageNet-21k, and then the full weights are fine-tuned on ImageNet-1k. The particular versions we use are ViT-B/16 and RegNet-Y/16GF. The transfer learning scheme for ViT and RegNet is a more practical approach due to their higher ID (closed-set) accuracy and overall better detection performance.

\vspace{-3mm}
\subsubsection{Comparison on detection scores}
We compare NNGuide with the state-of-the-art post-hoc detection score methods. The baselines include MSP, MaxLogit, ViM, Mahalanobis, SSD, GradNorm, KNN, and Energy detection scores, as described in Sec.~\ref{sec:related}. Tab.~\ref{table:result_in1k} indicates that our proposed NNGuide is more effective than or on par with other detection scores across model architectures, training schemes, and evaluation metrics. Upon the RegNet model, particularly, we achieve a new state-of-the-art performance, by significantly outperforming all other methods. This shows that with a well-trained model, NNGuide can achieve robust OOD detection on the large-scale benchmark.

\subsubsection{State-of-the-art performance with network truncator}
The recent works in OOD detection showed that the truncation methods that rectify the hidden/output layer signals give excellent performance. The insight of these approaches is that a particular truncation function perturbs only the signals from the OOD instance while retaining those of ID samples, hence enhancing the score gap between ID and OOD. The truncators however cannot be used alone and require an external OOD detection score (such as MSP, Energy, or KNN).

For a fair comparison with truncation approaches, we combine our detection score NNGuide with ReAct. Tab.~\ref{table:result_in1k_react} shows that when combined with the simple truncator i.e. ReAct, our proposed NNGuide performs the best over other recently proposed network truncators in both the FPR95 and AUROC metrics. Particularly, NNGuide outperforms BATS, which is limited to the networks with batch normalization. `ReAct + NNGuide' performs significantly better than a classifier rectifier DICE, even when used in conjunction with ReAct. In addition, When comparing ReAct combined with various detection scores, NNGuide demonstrates greater effectiveness and relevance than the other detection scores.

\subsection{Evaluation against natural distribution shift}

\paragraph{Datasets and configuration}

We evaluate the robustness of NNGuide against the natural distribution shift of the ID dataset. To this end, we consider the setting where the train ID data is ImageNet-1k and the test ID dataset is ImageNet-1k-V2 which consists of natural distribution shift samples. We note that both train and test ID datasets share the same semantics classes (\ie, the 1k number of classes in ImageNet). OOD detection in this setup can be challenging since the detection score may incorrectly identify the test ID sample as an OOD instance due to the distribution difference between the test and train ID sets. 
As to model configurations, we use the same models that are used for the ImageNet-1k evaluation in Sec.~\ref{sec:exp_in1k}.

\paragraph{Results on ImageNet-1k-V2}

Tab.~\ref{table:result_in1kv2} indicates that NNGuide is comparable to the state-of-the-art detection method ViM and Mahalanobis under MobileNet and the vision transformer, and significantly outperforms all other detection scores with ResNet-50 and RegNet. Even across MobileNet and ViT, however, NNGuide is overall more robust than Mahalanobis and ViM, indicated by the smaller fluctuation in the performance metrics. 

Both the Mahalanobis and ViM detectors are known to excel in the ViT-type architecture due to the Gaussian nature of  the vision transformer embedding space \cite{fort2021exploring,koner2021oodformer}.   We note however that ViT is suboptimal in this task even with large-scale pretraining and a much larger number of network parameters and inference time; ResNet-50 trained from scratch achieves better overall performance metrics than ViT. 

We found that the RegNet architecture built by the network-population level search principle is shown to be the best in our comparison across all metrics. Under this backbone, our proposed NNGuide outperforms other detection scores by a large margin in the FPR95 metric. 

\begin{figure}[t]
\centering
\includegraphics[width=.7\linewidth]{./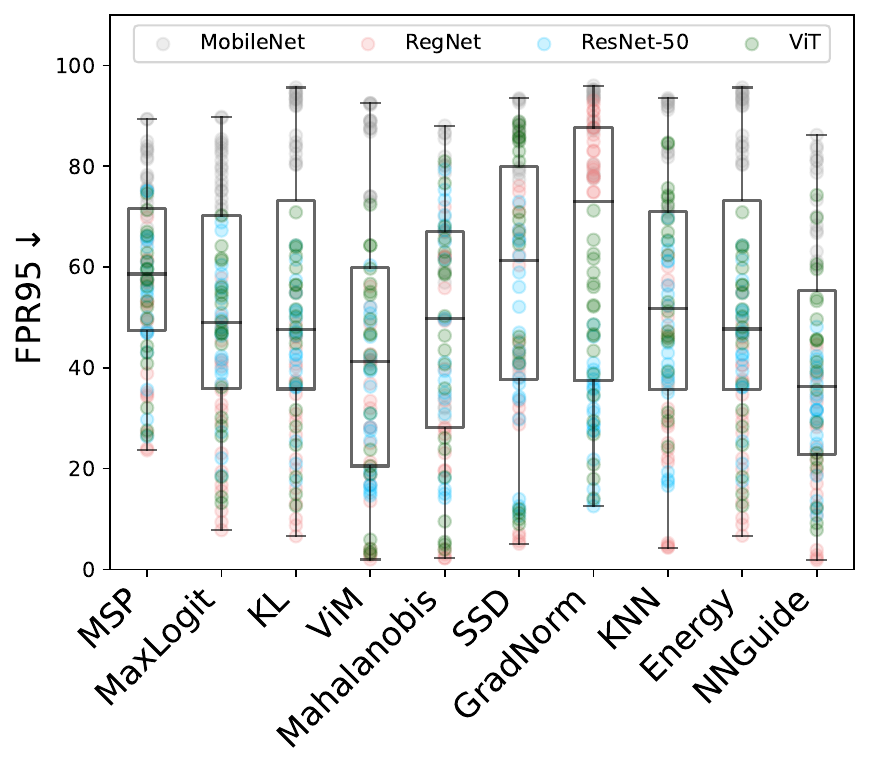}
\caption{
The performance summary of NNGuide and other detection scores on the ImageNet benchmarks across different models, test IDs, and OODs.
}
\label{fig:performance_summary}
\end{figure}

We present a summary of our performance evaluation in Fig.~\ref{fig:performance_summary}. The figure demonstrates that NNGuide exhibits less performance variation and better average performance compared to all other detection scores, including ViM, which is also based on fusing classifier signals with distance information.

\begin{table}[t]
\centering
\resizebox{.65\linewidth}{!}{
\begin{tabular}{llll}
\toprule
~ & FPR95$\downarrow$ & AUROC$\uparrow$ & AUPR$\uparrow$  \\
\midrule
MSP & 72.53 & 82.34 & 83.05  \\ 
MaxLogit & 67.92 & 85.14 & 85.61  \\ 
KL & 66.27 & 85.47 & 85.84  \\ 
ViM & 80.07 & 77.70 & 78.63  \\ 
Mahalanobis & 77.54 & 78.88 & 79.18  \\ 
SSD & 83.21 & 69.05 & 65.52  \\ 
GradNorm & 66.90 & 76.71 & 72.14  \\ 
KNN & 70.07 & 84.17 & 84.23  \\ 
Energy & 66.27 & 85.47 & 85.84  \\ 
NNGuide & \textbf{64.56} & \textbf{86.39} & \textbf{86.96} \\
\bottomrule
\end{tabular}
}
\caption{
Results on the CIFAR-100 benchmark with the ResNet-18 model trained on CIFAR-100 from scratch, achieving 75.66\% ID accuracy. The average performance across five different OODs is reported.}
\label{table:result_cifar100}
\end{table}

\subsection{Evaluation on the CIFAR-100 benchmark}

We evaluate NNGuide on the small-scale CIFAR-100 by training ResNet-18 on the train fold from scratch. Each class of CIFAR-100 contains a small number of low-resolution images, and hence the trained model can be suboptimal for both classification and OOD detection \cite{vaze2021open}. Tab.~\ref{table:result_cifar100} shows the average performance of NNGuide against five different OODs that are commonly used for evaluation (\ie CIFAR-10, SVHN, resized LSUN, resized ImageNet, and iSUN). NNGuide outperforms other baseline detection methods. However, unlike the ImageNet benchmarks, the performance boost by NNGuide is marginal on the CIFAR-100 evaluation protocol. This is due to the suboptimal model trained on low-quality ID data. Such a limitation of NNGuide is more carefully analyzed in the ablation study (Sec.~\ref{sec:exp_ablation_limitation}).

\begin{figure*}[t]
\centering
\includegraphics[width=.95\linewidth]{./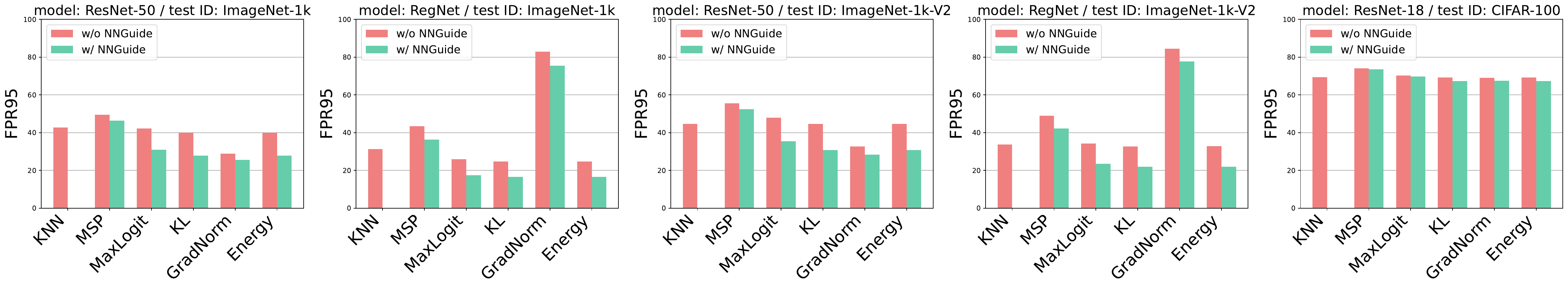}
\caption{
The compatibility to other classifier-based scores. The average performance across five different OODs is reported.
}
\label{fig:compatibility_other_confs}
\end{figure*}

\begin{table}
\centering
\resizebox{.85\linewidth}{!}{
\begin{tabular}{llll}
\toprule
~ & FPR95$\downarrow$ & AUROC$\uparrow$ & AUPR$\uparrow$ \\ 
\midrule
\multicolumn{3}{l}{\textit{baselines:}} \\
KNN & 42.73 & 90.19 & 97.44 \\ 
KNN with average similarity & 43.96 & 90.71 & 97.64 \\ 
Energy & 40.01 & 90.80 & 97.63 \\ 
\midrule
\multicolumn{3}{l}{\textit{naive fusion:}} \\
Product fusion & 33.53 & 92.17 & 97.92 \\ 
Sum fusion & 34.03 & 92.27 & 97.96 \\ 
Max fusion & 42.72 & 90.19 & 97.44 \\ 
Min fusion & 40.01 & 90.80 & 97.63 \\ 
\midrule
\multicolumn{3}{l}{\textit{missing core components}} \\
Mahalanobis guidance & 37.19 & 92.00 & 97.97 \\ 
Guidance term only & 30.31 & 91.58 & 97.42 \\ 
W/O confidence scaling & 36.23 & 91.81 & 97.87 \\ 
\midrule
NNGuide & \textbf{27.81} & \textbf{92.89} & \textbf{98.03} \\ 
\bottomrule
\end{tabular}
}
\caption{
Ablation study on the components of NNGuide. The average performance across five different OODs is reported.
}
\label{table:ablation_components}
\end{table}

\begin{figure*}[t]
\centering
\includegraphics[width=.995\linewidth]{./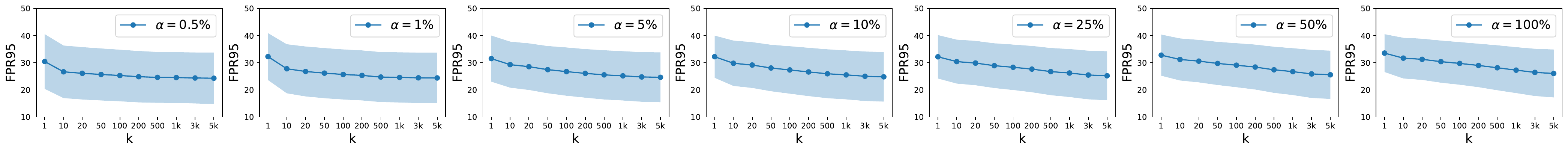}
\caption{
The hyperparameter analysis on the number $k$ of nearest neighbors and the sampling ratio $\alpha$.
}
\label{fig:hyperparameters}
\end{figure*}

\subsection{Ablation}

The ablation study is divided into several parts. (1) We examine the compatibility of NNGuide with other classifier-based detection scores besides the negative energy function. (2) We analyze which components of NNGuide contribute to improving the classifier-based confidence score. (3) We evaluate the impact of the hyperparameters $k$ and $\alpha$ related to the nearest neighbor search. (4) We present a limitation analysis of NNGuide from various aspects, highlighting the necessary requirements for its optimal usage.

\subsubsection{Compatibility to other classifier-based confidence scores}


We extend the use of NNGuide beyond the energy function by evaluating its compatibility with other classifier-based confidence scores, \ie, MSP, MaxLogit, KL, and GradNorm. Fig.~\ref{fig:compatibility_other_confs} demonstrates that NNGuide effectively enhances the performance of all considered scores.
Notably, the performance improvement is remarkable, particularly for the scores that fully utilize class-dependent information (both target and non-target class outputs) without the softmax nonlinearity. However, we note that the final performance of NNGuide heavily relies on the base confidence score. As such, NNGuide may not be effective when the base score is poor. We further analyze the limitations of NNGuide in Sec.~\ref{sec:exp_ablation_limitation}.

\subsubsection{Ablation on the components of NNGuide}
NNGuide consists of the base classifier confidence score $S_{base}(\mathbf{x})$ and the nearest-neighbor-based guidance term $G(\mathbf{x})$. The guidance term can be further broken down to the confidence scaling in Eq.~\eqref{eq:conf_scale} and the similarity ensemble in Eq.~\eqref{eq:sim_ensemble}.

Tab.~\ref{table:ablation_components} shows the overall results of evaluations conducted to ablate each component of NNGuide. Here, the original KNN detector as formulated in \cite{sun2022out} detects an OOD instance based on only the similarity to the $k$-th nearest neighbor; \ie $S_{\text{KNN}}(x) = \simop (\mathbf{z}_{(k)}, \mathbf{z})$. To see the effect of similarity ensemble, we modify the KNN detector to compute the average of top-$k$ similarities to nearest ID instances; \ie $S_{\text{avg-KNN}}(\mathbf{x}) = \sum_{i=1}^k \simop (\mathbf{z}_{(k)}, \mathbf{z})$.
As indicated by `KNN with average similarity', the similarity ensemble alone does not boost the performance. 

We argue that similarity ensemble is effective only when combined with confidence scaling. To validate this, we evaluate the performance of the guidance term $G(\mathbf{x})$. The term can be considered as a weighted KNN, where the weights are the confidences $s_i$. Indicated by 'Guidance term only', Tab.~\ref{table:ablation_components} shows a notable improvement to the original KNN. As discussed in Sec.~\ref{sec:method}, the nearest neighbor search based on \textit{confidence-scaled} similarities selects the bank set instances in the high-confidence region. Hence, this search algorithm operates with the most salient ID features, ignoring possible outliers. 

We further verify the effectiveness of the confidence-scaled nearest neighbor search by removing the confidence-scaling component from NNGuide. The result indicated by 'W/O confidence scaling' shows that confidence scaling is a significant factor in NNGuide.

We note that the Mahalanobis (density) score can also be used to bind the overconfidence of the classifier on the far-OOD region. Hence, we evaluate its impact by substituting the guiding term with the Mahalanobis distance.
Tab.~\ref{table:ablation_components} shows that the guidance by Mahalnobis score is not as effective as NNGuide. The disadvantage of Mahalanobis may stem from a strong parametric assumption that the ID features should be Gaussian. 

Finally, we test with other types of fusion techniques. We find that a naive combination of the KNN and classifier-based detection score by basic algebraic operations such as min, max, sum, and the product is not as effective as the NNGuide. Although these basic fusion approaches are aligned with our high-level objective, the confidence-scaled nearest neighbor search and similarity ensemble parts are missing therein. Hence, the naive fusion detectors could be neither fine-grained nor robust, testified by their worse performances.

\subsubsection{Analysis of the hyperparameters}
\label{sec:exp_ablation_hyperparameters}
We evaluate the impact of hyperparameters $\alpha$ and $k$ in NNGuide. We consider $\alpha \in \{0.5, 1, 5, 10, 25, 50, 100\}$ and $k \in \{1, 10, 20, 50, 100, 200, 500, 1000, 3000, 5000\}$ and  for the sampling ratio and  a number of neighbors, respectively. 
Fig.~\ref{fig:hyperparameters} indicates that the performance of NNGuide is fairly robust across different $k$ as long as $k \geq 10$. Moreover, the performance has a consistent trend across different sampling ratios $\alpha$.  This is in contrast to the vanilla KNN detection score; as reported in \cite{sun2022out}, the KNN score exhibits a degree of performance fluctuation under the variation of $k$ and $\alpha$. In addition, the performance variance is lower in the small sampling regime (\ie small $\alpha$\%), suggesting that the hyperparameters of NNGuide can be fairly easily tuned.  

\begin{figure}[t]
\centering
\includegraphics[width=.75\linewidth]{./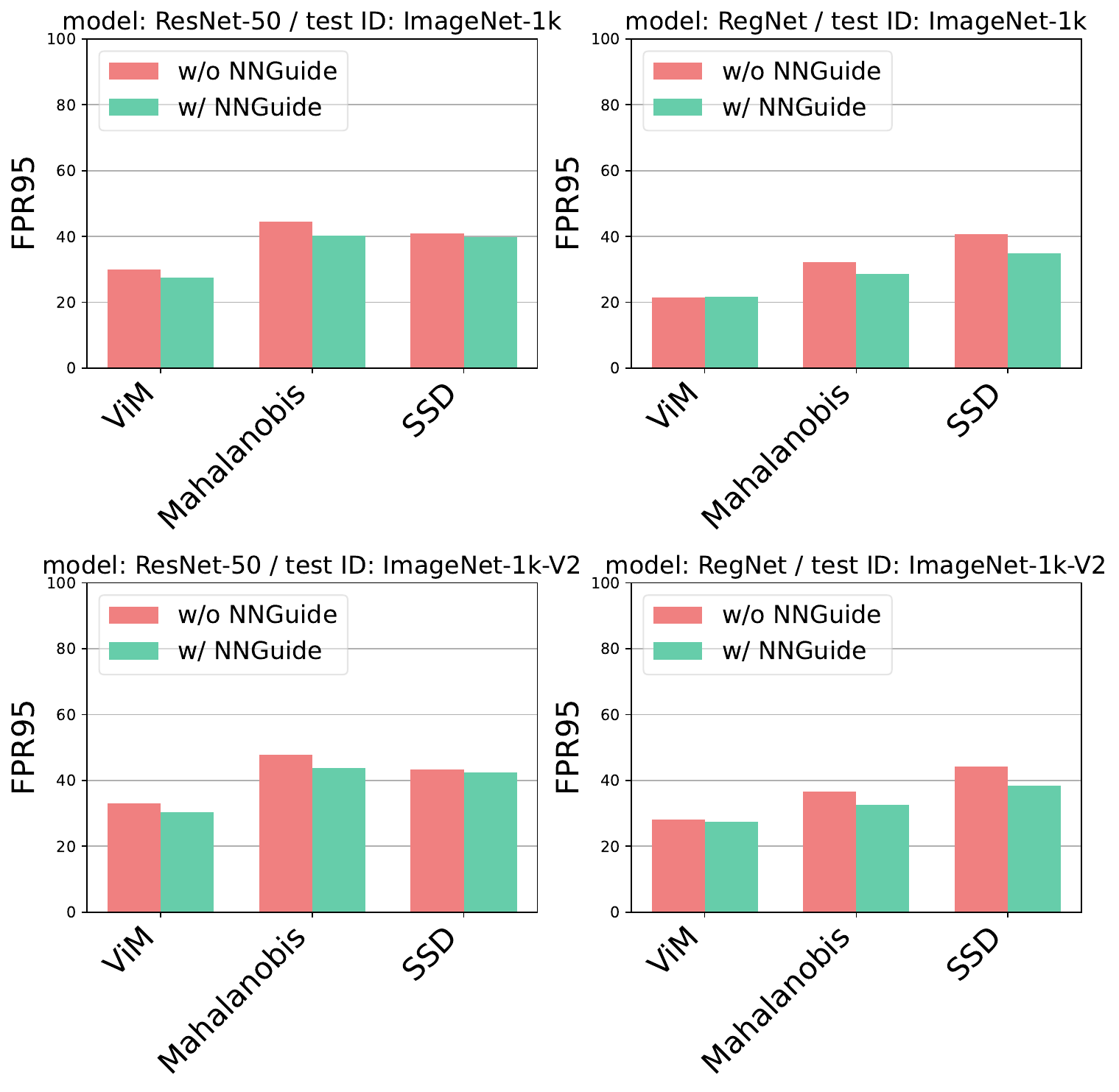}
\caption{
NNGuide could be suboptimal when combined with distance-based scores.
}
\label{fig:compatibility_dist_scores}
\end{figure}

\begin{figure}[t]
\centering
\includegraphics[width=.85\linewidth]{./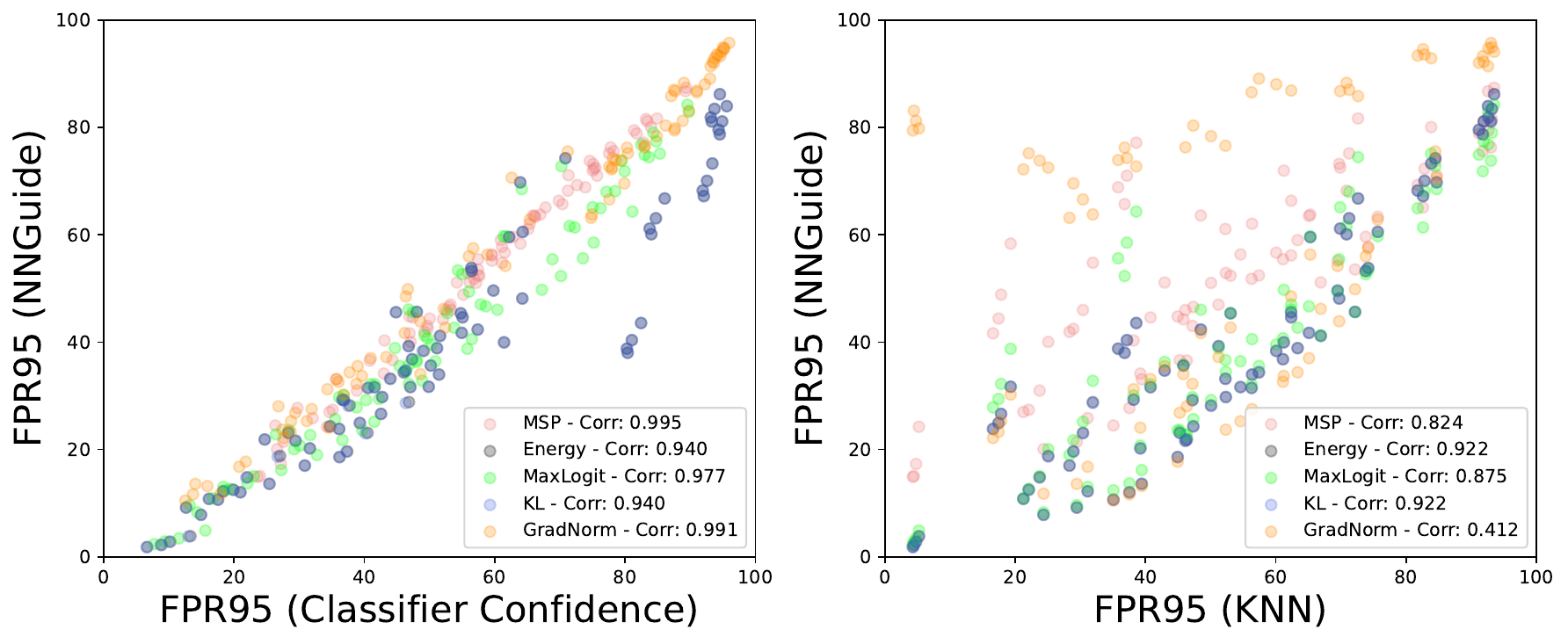}
\caption{
The dependency of NNGuide on the classifier confidence and KNN.
}
\label{fig:correlation}
\end{figure}

\subsubsection{Limitation analysis of NNGuide}
\label{sec:exp_ablation_limitation}
Despite the superiority of NNGuide compared to other detection scores, it is not perfect. We analyze its limitation, figuring out necessary requirements for NNGuide.

\paragraph{The necessity of classifier for NNGuide}
With the nearest neighbor guidance by Eq.~\eqref{eq:nnguide}, the base confidence score is assumed to be a classifier-based score with fine-grained detection capability. One possible approach is to extend the base confidence score to distance-based scores (\eg Mahalanobis, SSD, and ViM), which however may lack a fine-grained nature. We argue that NNGuide could be suboptimal if the base confidence is a distance-based score. Guiding a distance-based score by the nearest neighbor guidance may not significantly boost its detection capability. As shown in Fig.~\ref{fig:compatibility_dist_scores}, the improvement by NNGuide is inconsistent and sometimes marginal. The guidance is particularly insignificant when the base score is ViM, where the overly low energy in the far-OOD region is already mitigated by the orthogonal distance to the ID subspace. 

In some cases, NNGuide improves Mahalanobis and SSD. We believe this is due to the following fact: Mahalanobis and SSD represent poor distance functions when the data feature has deviated from Gaussian. On the other hand, the nearest neighbors represent the data boundary more accurately, providing a better distance function. Thus, the guidance by nearest neighbors refines the Mahlanobis and SSD distances.

\paragraph{Dependency on both classifier confidence and KNN}
NNGuide is formulated by nearest neighbors and the classifier outputs. Hence, the performance of NNGuide inevitably depends on both KNN and the classifier's confidence.  
Fig.~\ref{fig:correlation} indicates a strong linear correlation between NNGuide and the base classifier's confidence in terms of the performance metric.
NNGuide exhibits a strong correlation with KNN only when the classifier-based score demonstrates robust OOD detection capability. Specifically, the correlation is not observed with suboptimal classifier confidences such as GradNorm and MSP (Fig.~\ref{fig:compatibility_other_confs}). Conversely, NNGuide shows a strong correlation with KNN when the confidences are based on Energy, KL, and MaxLogit.
This correlation trend suggests that the optimal usage of NNGuide requires both good classifier confidence and a strong feature extractor. 

The analysis also explains our results on ResNet-18 (CIFAR-100) and RegNet (ImageNet). The suboptimal ResNet-18 trained on CIFAR-100 from scratch likely produces poor classification outputs and extracted features. Thus, NNGuide improves the base score only marginally.
In contrast, the RegNet model trained on large-scale datasets with the best principles attains a highly discriminative classifier and representation. Accordingly, in RegNet, the performance boost by NNGuide is significant.

\section{Conclusion}

We proposed a novel method for OOD detection called the nearest neighbor guidance (NNGuide) that improves a classifier's confidence by guiding it with the nearest neighbors' information. NNGuide prevents the overconfidence of the classifier while retaining its fine-grained detection capability, thereby achieving balanced robustness against both far- and near-OODs. NNGuide has been examined extensively on the large-scale ImageNet-1k benchmarks, including the natural distribution shift and transfer learning scenarios. NNGuide has shown to be robust across different model backbones and OODs, achieving state-of-the-art performance with RegNet.

\paragraph{Acknowledgements}
This work was supported by the National Research Foundation of Korea (NRF) grant funded by the Korea government (MSIP) (No.~NRF-2022R1A2C1010710) and the Materials/Parts Technology Development Program grant funded by the Korea government (MOTIE) (No.~1415187441).

{\small
\bibliographystyle{ieee_fullname}
\bibliography{egbib}
}

\clearpage

\appendix

\onecolumn

\setcounter{thm}{0}

\section{Supplementary to Method}

Here, we provide the proof for Proposition \ref{thm:theory}

\begin{prop}
$\quad$
\begin{enumerate}
\item[(a)] If $\min_i \lVert \widehat{\phi}(\mathbf{x}) - \widehat{\mathbf{z}}_i\rVert_2 \geq 2$, then $S(\mathbf{x}) \leq 0$.
\item[(b)] Suppose $\max_{i \leq k} \lVert \widehat{\phi}(\mathbf{x}) - \widehat{\mathbf{z}}_{(i)}\rVert_2 < \epsilon$. 
If $\min_{i \leq k} s_{(i)} > M$, then
\begin{equation}
\label{eq:near_high_conf}
S(\mathbf{x}) > M (S_{base}(\mathbf{x}) - \epsilon/2)
\end{equation}
On the other hand, if $\max_{i \leq k} s_{(i)} \leq \delta$, then
\begin{equation}
\label{eq:near_low_conf}
S(\mathbf{x}) \leq \delta  S_{base}(\mathbf{x})
\end{equation}
\end{enumerate}
\end{prop}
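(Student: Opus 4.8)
The plan is to reduce every claim to a single elementary identity relating cosine similarity to normalized Euclidean distance, together with the two structural facts that $S_{base}$ is nonnegative (its range is $[0,\infty)$) and that cosine similarity never exceeds $1$. For any unit vectors $u,v$ one has $u\cdot v = 1 - \tfrac12\lVert u-v\rVert_2^2$, so that $\simop(\mathbf z_{(i)},\mathbf z) = \widehat{\mathbf z}_{(i)}\cdot\widehat\phi(\mathbf x) = 1 - \tfrac12\lVert \widehat\phi(\mathbf x)-\widehat{\mathbf z}_{(i)}\rVert_2^2$. This converts each distance hypothesis directly into a bound on the similarities appearing in $G(\mathbf x)=\frac1k\sum_{i\le k} s_{(i)}\simop(\mathbf z_{(i)},\mathbf z)$, after which I only need to control signs when passing from $G$ to $S=S_{base}\cdot G$.

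For part (a), I would observe that the hypothesis $\min_i\lVert\widehat\phi(\mathbf x)-\widehat{\mathbf z}_i\rVert_2\ge 2$ applies in particular to each of the $k$ selected neighbors. Substituting $\lVert\cdot\rVert_2\ge 2$ into the identity gives $\simop(\mathbf z_{(i)},\mathbf z)\le 1-\tfrac12\cdot 4 = -1$, and since cosine similarity is bounded below by $-1$ each similarity equals $-1$; in any case it is nonpositive. Because every weight $s_{(i)}=S_{base}(\mathbf x_{(i)})\ge 0$, each summand $s_{(i)}\simop(\mathbf z_{(i)},\mathbf z)$ is $\le 0$, hence $G(\mathbf x)\le 0$. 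Multiplying by $S_{base}(\mathbf x)\ge 0$ yields $S(\mathbf x)=S_{base}(\mathbf x)G(\mathbf x)\le 0$.

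For part (b), the hypothesis $\max_{i\le k}\lVert\widehat\phi(\mathbf x)-\widehat{\mathbf z}_{(i)}\rVert_2<\epsilon$ gives, through the same identity, $\simop(\mathbf z_{(i)},\mathbf z)>1-\tfrac12\epsilon^2$ for every $i\le k$. In the high-confidence case $\min_{i\le k}s_{(i)}>M$ (taking $M>0$ so that the similarities, which are positive for small $\epsilon$, may be multiplied), each summand obeys $s_{(i)}\simop(\mathbf z_{(i)},\mathbf z)>M(1-\tfrac12\epsilon^2)$, so $G(\mathbf x)>M(1-\tfrac12\epsilon^2)$ and therefore $S(\mathbf x)>M\,S_{base}(\mathbf x)(1-\tfrac12\epsilon^2)$, which I will massage into \eqref{eq:near_high_conf}. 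For the low-confidence case the distance hypothesis is not even needed: using only $\simop\le 1$ and $0\le s_{(i)}\le\delta$, every summand satisfies $s_{(i)}\simop(\mathbf z_{(i)},\mathbf z)\le s_{(i)}\le\delta$, so $G(\mathbf x)\le\delta$ and hence $S(\mathbf x)\le\delta\,S_{base}(\mathbf x)$, giving \eqref{eq:near_low_conf} immediately.

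The only delicate point is matching the constant in \eqref{eq:near_high_conf}: the natural bound above is $S(\mathbf x)>M\,S_{base}(\mathbf x)(1-\tfrac12\epsilon^2)$, whereas the stated inequality is $S(\mathbf x)>M(S_{base}(\mathbf x)-\tfrac\epsilon2)$. Dividing by $M>0$, these two agree precisely when $S_{base}(\mathbf x)\,\epsilon\le 1$, a mild condition automatic in the near-neighbor regime of small $\epsilon$ under the usual confidence normalization. Confirming that this is indeed the intended regularity assumption, rather than the cleaner scale-invariant form $M\,S_{base}(\mathbf x)(1-\tfrac12\epsilon^2)$, is the main thing to pin down; everything else is sign-bookkeeping justified by $S_{base}\ge 0$ and $\simop\le 1$.
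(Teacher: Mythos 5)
Your argument follows the same route as the paper's: convert the distance hypotheses into similarity bounds via the unit-vector identity, then use $s_{(i)}\ge 0$, $S_{base}\ge 0$, and $\simop\le 1$ to control the sign and size of $G(\mathbf{x})$. Parts (a) and the low-confidence half of (b) match the paper's proof exactly (and your observation that the distance hypothesis is not needed for the $\delta$-bound is correct). The one divergence is the constant in the high-confidence bound, and your worry there is well-founded: you use the correct identity $\lVert \widehat{\phi}(\mathbf{x})-\widehat{\mathbf{z}}_i\rVert_2^2 = 2-2\simop(\mathbf{z}_i,\mathbf{z})$ and obtain $\simop > 1-\epsilon^2/2$, whereas the paper's displayed identity omits the square on the norm and therefore reads off $\simop \ge 1-\epsilon/2$ directly from $\lVert\cdot\rVert_2<\epsilon$. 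Moreover, even granting the paper's version, its chain of inequalities terminates at $M\,S_{base}(\mathbf{x})(1-\epsilon/2)$, which equals or dominates the stated $M(S_{base}(\mathbf{x})-\epsilon/2)$ only when $S_{base}(\mathbf{x})\le 1$ --- a condition not guaranteed for the negative energy score --- just as your $M\,S_{base}(\mathbf{x})(1-\epsilon^2/2)$ dominates it only when $S_{base}(\mathbf{x})\,\epsilon\le 1$. So the ``delicate point'' you flag is not a defect of your argument but a genuine imprecision in the paper's statement and proof; your derivation is the more careful one, and the cleanest fix is to state the conclusion in the multiplicative form $S(\mathbf{x})>M\,S_{base}(\mathbf{x})(1-\epsilon^2/2)$ rather than to impose the auxiliary smallness condition.
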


\begin{proof}
(a) Note that 
\begin{equation}
\label{eq:dist2sim}
 \lVert \widehat{\phi}(\mathbf{x}) - \widehat{\mathbf{z}}_i\rVert_2 =
2 - 2 \simop(\mathbf{z}_i, \mathbf{z}).
\end{equation}
Hence, if $\min_i \lVert \widehat{\phi}(\mathbf{x}) - \widehat{\mathbf{z}}_i\rVert_2 > 2$, then $\max_i \simop (\mathbf{z}_i, \mathbf{z}) \leq 0$. Therefore,
\begin{equation}
S(\mathbf{x}) = 
S_{base}(\mathbf{x}) \cdot \frac{1}{k} \sum_{i=1}^k s_{(i)} \simop (\mathbf{z}_{(i)}, \mathbf{z})
 \leq S_{base}(\mathbf{x}) \cdot 0.
\end{equation} 
(b) Assume $\max_{i \leq k} \lVert \widehat{\phi}(\mathbf{x}) - \widehat{\mathbf{z}}_{(i)}\rVert_2 < \epsilon$.
Then, $\simop (\mathbf{z}_{(i)}, \mathbf{z}) \geq 1 - \epsilon/2$ due to \eqref{eq:dist2sim}. Therefore, if $\min_{i \leq k} s_{(i)} > M$
\begin{alignat}{2}
S(\mathbf{x})  & = S_{base}(\mathbf{x}) \frac{1}{k} \sum_{i=1}^k s_{(i)} \simop (\mathbf{z}_{(i)}, \mathbf{z})  &&    \\
& \geq S_{base}(\mathbf{x})  (1 - \epsilon/2) \frac{1}{k} \sum_{i=1}^k s_{(i)}  &&   \\
& > S_{base}(\mathbf{x}) (1 - \epsilon/2) \frac{1}{k} kM  &&   \\
& =  MS_{base}(\mathbf{x})(1-\epsilon/2).  &&   \\
\end{alignat}
On the other hand, if $\max_{i \leq k} s_{(i)} \leq \delta$, then
\begin{equation}
S(\mathbf{x}) = S_{base}(\mathbf{x}) \frac{1}{k} \sum_{i=1}^k s_{(i)} \simop (\mathbf{z}_{(i)}, \mathbf{z})
\leq S_{base}(\mathbf{x}) \frac{1}{k} k \delta \cdot 1 = \delta S_{base}(\mathbf{x})
\end{equation}
as $\simop (\mathbf{z}_{(i)}, \mathbf{z}) \leq 1$. This completes the proof.
\end{proof}

\begin{cor}
\label{thm:theory_cor}
Consider $\mathbf{x}_h$ and $\mathbf{x}_l$ such that
\begin{equation}
s_{(1)_{h}} \simop ( \mathbf{z}_{(1)_{h}}, \mathbf{z}_h) \geq \cdots \geq s_{(n)_h} \simop ( \mathbf{z}_{(n)_h}, \mathbf{z}_h).
\end{equation}
and
\begin{equation}
s_{(1)_{l}} \simop ( \mathbf{z}_{(1)_{l}}, \mathbf{z}_l) \geq \cdots \geq s_{(n)_l} \simop ( \mathbf{z}_{(n)_l}, \mathbf{z}_l).
\end{equation}
where $\mathbf{z}_h = \phi(\mathbf{x}_h)$ and $\mathbf{z}_l = \phi(\mathbf{x}_l)$.
Suppose
$\max_{i \leq k} \lVert \widehat{\phi}(\mathbf{x}_h) - \widehat{\mathbf{z}}_{(i)_h}\rVert_2 < \epsilon$ and $\max_{i \leq k} \lVert \widehat{\phi}(\mathbf{x}_l) - \widehat{\mathbf{z}}_{(i)_l}\rVert_2 < \epsilon$.
Suppose 
$\min_{i \leq k} s_{(i)_h}(\mathbf{x}_h) > M$
and
$\max_{i \leq k} s_{(i)_l}(\mathbf{x}_h) \leq \delta$. Then,
\begin{equation}
\frac{S(\mathbf{x}_h)}{S_{base}(\mathbf{x}_h)}
>
\frac{S(\mathbf{x}_l)}{S_{base}(\mathbf{x}_l)}
\end{equation}
if 
\begin{equation}
M - \delta > \frac{\epsilon}{2 S_{base}(\mathbf{x}_h)}
\end{equation}
\end{cor}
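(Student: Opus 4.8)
The plan is to observe that the ratio $S(\mathbf{x})/S_{base}(\mathbf{x})$ appearing on each side of the claimed inequality is, by Eq.~\eqref{eq:nnguide}, nothing but the guidance term $G(\mathbf{x})$ of Eq.~\eqref{eq:sim_ensemble}. So the corollary is equivalent to the assertion $G(\mathbf{x}_h) > G(\mathbf{x}_l)$, and I would establish it by lower-bounding $G(\mathbf{x}_h)$ and upper-bounding $G(\mathbf{x}_l)$ separately, each via one of the two branches of Proposition~\ref{thm:theory}(b), and then comparing the two bounds.

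First I would handle the high-confidence sample $\mathbf{x}_h$. Its hypotheses, namely $\max_{i\le k}\lVert\widehat{\phi}(\mathbf{x}_h)-\widehat{\mathbf{z}}_{(i)_h}\rVert_2 < \epsilon$ together with $\min_{i\le k} s_{(i)_h} > M$, are exactly those of the first branch of Proposition~\ref{thm:theory}(b), which supplies the lower bound $S(\mathbf{x}_h) > M(S_{base}(\mathbf{x}_h)-\epsilon/2)$. Dividing by $S_{base}(\mathbf{x}_h)$, which I take to be strictly positive (the range of $S_{base}$ is $[0,\infty)$, and positivity is implicit in forming the ratio), yields a lower bound on $G(\mathbf{x}_h)$ of the shape $M$ minus a correction of order $\epsilon/(2 S_{base}(\mathbf{x}_h))$. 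Symmetrically, the low-confidence sample $\mathbf{x}_l$ satisfies the hypotheses of the second branch, since $\max_{i\le k} s_{(i)_l}\le\delta$; that branch, using only $\simop\le 1$, gives the clean upper bound $G(\mathbf{x}_l) = S(\mathbf{x}_l)/S_{base}(\mathbf{x}_l)\le\delta$.

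Finally I would chain the two estimates: to force $G(\mathbf{x}_h) > G(\mathbf{x}_l)$ it suffices that the lower bound just derived for $G(\mathbf{x}_h)$ already exceeds the upper bound $\delta$ for $G(\mathbf{x}_l)$. Rearranging this sufficient condition isolates $M-\delta$ on one side and the correction term $\epsilon/(2 S_{base}(\mathbf{x}_h))$ on the other, which is precisely the displayed threshold. The one step requiring care, and the main obstacle, is the constant bookkeeping in the division by $S_{base}(\mathbf{x}_h)$: one must carry the $\epsilon/2$ correction through so that it lands in exactly the form $\epsilon/(2 S_{base}(\mathbf{x}_h))$ appearing in the hypothesis, and keep the inequalities strict where they must be so that the final comparison is a genuine strict inequality rather than an equality. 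Beyond this, everything is a direct substitution of the two cases of the Proposition, with no further estimates needed.
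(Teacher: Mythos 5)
Your proposal is correct and follows essentially the same route as the paper's own proof: apply the two branches of Proposition~\ref{thm:theory}(b) to $\mathbf{x}_h$ and $\mathbf{x}_l$ respectively, divide by the (positive) base scores to turn both sides into guidance terms, and chain the resulting bounds through $\delta$. The one caveat---shared by the paper's own write-up---is the ``constant bookkeeping'' you flag: dividing $M(S_{base}(\mathbf{x}_h)-\epsilon/2)$ by $S_{base}(\mathbf{x}_h)$ leaves the correction $M\epsilon/(2S_{base}(\mathbf{x}_h))$ rather than $\epsilon/(2S_{base}(\mathbf{x}_h))$, so the rearranged sufficient condition is $M-\delta> M\epsilon/(2S_{base}(\mathbf{x}_h))$ and the threshold as displayed only suffices when $M\le 1$.
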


\begin{proof}
Note that we have \eqref{eq:near_high_conf} and \eqref{eq:near_low_conf}.
Thus, we have 
\begin{equation}
\frac{S(\mathbf{x}_h)}{S_{base}(\mathbf{x}_h)}
> 
\frac{M(S_{base}(\mathbf{x}_h) - \epsilon/2)}{S_{base}(\mathbf{x}_h)}
\geq
\dfrac{\delta S_{base}(\mathbf{x}_l)}{S_{base}(\mathbf{x}_l)}
\geq
\frac{S(\mathbf{x}_l)}{S_{base}(\mathbf{x}_l)}
\end{equation}
if and only if
\begin{equation}
M - \delta - \frac{\epsilon}{2 S_{base}(\mathbf{x}_h)} > 0
\end{equation}
which is equivalent to $M-\delta > \frac{\epsilon}{2 S_{base}(\mathbf{x}_h)}$. This completes the proof.
\end{proof}

Corollary \ref{thm:theory_cor} states the following: Consider $\mathbf{x}_h$ in a high confidence region, and $\mathbf{x}_l$ in a relatively lower confidence region. Assume both $\mathbf{x}_h$ and $\mathbf{x}_l$ are near to the train ID data (\ie bank set). Then, the incremental factor is higher on $\mathbf{x}_h$ than $\mathbf{x}_l$ if $\mathbf{x}_h$ if the nearest neighbors to $\mathbf{x}_h$ have sufficiently high confidence.

\section{Supplementary to Experiments}
\label{asec:exp}

\begin{table*}[t]
\centering
\resizebox{.995\linewidth}{!}{
\begin{tabular}{ll|ccc|ccc|ccc|ccc|ccc|ccc|ccc}
\toprule
~ & OOD & \multicolumn{3}{c}{iNaturalist} & \multicolumn{3}{c}{SUN} & \multicolumn{3}{c}{Places} & \multicolumn{3}{c}{Textures} & \multicolumn{3}{c}{OpenImage-O} & \multicolumn{3}{c}{Average-curated} & \multicolumn{3}{c}{Average} \\
Detection score & Model & FPR95 & AUROC & AUPR & FPR95 & AUROC & AUPR & FPR95 & AUROC & AUPR & FPR95 & AUROC & AUPR & FPR95 & AUROC & AUPR & FPR95 & AUROC & AUPR & FPR95 & AUROC & AUPR \\ 
\midrule
MSP & ResNet-50 & 29.74 & 93.78 & 98.59 & 59.54 & 84.56 & 95.69 & 60.94 & 84.28 & 95.92 & 50.02 & 84.90 & 97.45 & 47.44 & 89.68 & 95.90 & 50.06 & 86.88 & 96.91 & 49.54 & 87.44 & 96.71 \\ 
MaxLogit & ResNet-50 & 22.06 & 95.99 & 99.15 & 50.90 & 88.43 & 96.99 & 53.78 & 87.37 & 96.79 & 42.25 & 88.42 & 97.90 & 41.63 & 92.23 & 97.04 & 42.25 & 90.05 & 97.71 & 42.12 & 90.49 & 97.57 \\ 
KL & ResNet-50 & 20.98 & 96.17 & 99.19 & 47.06 & 88.91 & 97.08 & 51.15 & 87.70 & 96.85 & 39.31 & 88.90 & 97.96 & 41.57 & 92.32 & 97.06 & 39.63 & 90.42 & 97.77 & 40.01 & 90.80 & 97.63 \\ 
ViM & ResNet-50 & 16.68 & 96.87 & 99.35 & 39.34 & 90.86 & 97.56 & 49.21 & 88.48 & 97.03 & 15.87 & 94.21 & 98.77 & 28.18 & 94.59 & 97.94 & 30.28 & 92.60 & 98.18 & 29.86 & \textbf{93.00} & \textbf{98.13} \\ 
Mahalanobis & ResNet-50 & 35.04 & 94.79 & 98.94 & 64.99 & 86.55 & 96.73 & 70.31 & 83.92 & 96.00 & 15.02 & 95.52 & 99.29 & 37.52 & 93.89 & 97.84 & 46.34 & 90.19 & 97.74 & 44.58 & 90.93 & 97.76 \\ 
SSD & ResNet-50 & 33.76 & 94.64 & 98.89 & 56.04 & 88.35 & 97.14 & 65.12 & 84.52 & 96.06 & 11.81 & 96.54 & 99.45 & 37.97 & 93.29 & 97.56 & 41.68 & 91.01 & 97.88 & 40.94 & 91.47 & 97.82 \\ 
GradNorm & ResNet-50 & 13.70 & 97.24 & 99.41 & 28.75 & 93.34 & 98.34 & 37.73 & 91.04 & 97.71 & 28.69 & 91.88 & 98.55 & 35.75 & 91.50 & 96.46 & 27.22 & \textbf{93.38} & \textbf{98.50} & 28.92 & 93.00 & 98.09 \\ 
KNN & ResNet-50 & 37.53 & 93.86 & 98.67 & 54.57 & 86.98 & 96.71 & 63.34 & 83.54 & 95.68 & 17.45 & 94.13 & 99.00 & 40.77 & 92.46 & 97.15 & 43.22 & 89.63 & 97.51 & 42.73 & 90.19 & 97.44 \\ 
Energy & ResNet-50 & 20.98 & 96.17 & 99.19 & 47.05 & 88.91 & 97.08 & 51.15 & 87.70 & 96.85 & 39.31 & 88.90 & 97.96 & 41.56 & 92.32 & 97.06 & 39.62 & 90.42 & 97.77 & 40.01 & 90.80 & 97.63 \\ 
\rowcolor{Gray} 
NNGuide & ResNet-50 & 12.02 & 97.47 & 99.43 & 31.62 & 91.66 & 97.63 & 38.88 & 90.12 & 97.34 & 24.93 & 91.52 & 98.27 & 31.60 & 93.66 & 97.47 & \textbf{26.86} & 92.69 & 98.17 & \textbf{27.81} & 92.89 & 98.03 \\ 
\midrule
MSP & MobileNet & 72.65 & 84.01 & 96.27 & 81.78 & 76.49 & 93.90 & 81.39 & 76.23 & 93.83 & 73.90 & 78.51 & 96.43 & 75.47 & 82.04 & 92.83 & 77.43 & 78.81 & 95.11 & 77.04 & 79.46 & 94.65 \\ 
MaxLogit & MobileNet & 76.24 & 81.80 & 95.62 & 83.00 & 74.88 & 93.31 & 82.48 & 74.72 & 93.28 & 73.55 & 77.66 & 96.21 & 75.03 & 80.90 & 92.06 & 78.82 & 77.26 & 94.60 & 78.06 & 77.99 & 94.10 \\ 
KL & MobileNet & 91.93 & 67.88 & 91.98 & 94.36 & 65.80 & 91.02 & 93.16 & 66.26 & 91.10 & 80.28 & 71.55 & 95.19 & 83.78 & 72.78 & 88.50 & 89.93 & 67.87 & 92.32 & 88.70 & 68.85 & 91.56 \\ 
ViM & MobileNet & 86.86 & 69.57 & 92.14 & 88.67 & 66.37 & 90.80 & 92.16 & 62.43 & 89.56 & 40.71 & 89.59 & 98.34 & 72.95 & 80.01 & 91.79 & 77.10 & 71.99 & 92.71 & 76.27 & 73.59 & 92.53 \\ 
Mahalanobis & MobileNet & 66.86 & 82.62 & 95.80 & 80.01 & 72.49 & 92.61 & 86.51 & 67.54 & 91.02 & 31.95 & 92.54 & 98.85 & 59.00 & 86.26 & 94.53 & \textbf{66.33} & 78.80 & 94.57 & \textbf{64.87} & 80.29 & 94.56 \\ 
SSD & MobileNet & 86.34 & 64.32 & 89.37 & 88.42 & 61.87 & 88.52 & 93.24 & 53.99 & 85.57 & 41.68 & 90.45 & 98.67 & 79.21 & 74.42 & 88.79 & 77.42 & 67.66 & 90.53 & 77.78 & 69.01 & 90.18 \\ 
GradNorm & MobileNet & 94.24 & 62.85 & 90.08 & 93.66 & 63.23 & 89.80 & 95.24 & 60.18 & 88.83 & 78.56 & 73.61 & 95.79 & 87.69 & 66.52 & 84.91 & 90.42 & 64.97 & 91.12 & 89.88 & 65.28 & 89.88 \\ 
KNN & MobileNet & 81.76 & 75.73 & 94.17 & 91.17 & 66.04 & 90.96 & 92.62 & 62.02 & 89.56 & 35.80 & 90.87 & 98.50 & 69.87 & 81.43 & 92.54 & 75.34 & 73.67 & 93.30 & 74.24 & 75.22 & 93.14 \\ 
Energy & MobileNet & 91.92 & 67.88 & 91.98 & 94.36 & 65.80 & 91.02 & 93.16 & 66.26 & 91.10 & 80.28 & 71.55 & 95.19 & 83.78 & 72.78 & 88.50 & 89.93 & 67.87 & 92.32 & 88.70 & 68.85 & 91.56 \\ 
\rowcolor{Gray} 
NNGuide & MobileNet & 68.24 & 82.07 & 95.69 & 79.57 & 76.10 & 93.86 & 81.87 & 74.23 & 93.19 & 38.78 & 89.32 & 98.18 & 61.16 & 84.58 & 93.77 & 67.11 & \textbf{80.43} & \textbf{95.23} & 65.92 & \textbf{81.26} & \textbf{94.94} \\
\midrule
MSP & ViT & 27.58 & 93.96 & 98.63 & 57.43 & 85.18 & 96.36 & 61.13 & 84.34 & 96.12 & 53.21 & 84.96 & 97.61 & 44.32 & 89.89 & 95.95 & 49.84 & 87.11 & 97.18 & 48.74 & 87.67 & 96.94 \\ 
MaxLogit & ViT & 13.19 & 97.19 & 99.37 & 47.45 & 86.80 & 96.43 & 54.36 & 83.13 & 95.20 & 44.70 & 86.11 & 97.52 & 28.41 & 93.23 & 97.02 & 39.93 & 88.31 & 97.13 & 37.62 & 89.29 & 97.11 \\ 
KL & ViT & 12.64 & 97.34 & 99.40 & 48.05 & 86.47 & 96.33 & 56.41 & 82.22 & 94.93 & 46.79 & 85.72 & 97.47 & 28.33 & 93.31 & 97.04 & 40.97 & 87.94 & 97.03 & 38.44 & 89.01 & 97.03 \\ 
ViM & ViT & 3.42 & 99.20 & 99.82 & 49.66 & 88.05 & 96.86 & 59.69 & 83.68 & 95.58 & 42.55 & 88.46 & 98.11 & 20.50 & 95.79 & 98.29 & 38.83 & 89.85 & 97.59 & 35.16 & 91.04 & 97.73 \\ 
Mahalanobis & ViT & 4.94 & 98.85 & 99.74 & 58.77 & 88.15 & 97.07 & 65.62 & 85.46 & 96.45 & 43.49 & 90.30 & 98.60 & 23.87 & 95.92 & 98.53 & 43.21 & 90.69 & 97.97 & 39.34 & 91.73 & 98.08 \\ 
SSD & ViT & 11.19 & 97.50 & 99.43 & 84.91 & 70.87 & 92.07 & 88.23 & 65.10 & 90.29 & 69.38 & 79.81 & 96.69 & 45.01 & 88.62 & 95.24 & 63.43 & 78.32 & 94.62 & 59.74 & 80.38 & 94.74 \\ 
GradNorm & ViT & 14.06 & 96.62 & 99.14 & 46.68 & 86.60 & 96.09 & 56.70 & 82.84 & 94.95 & 43.37 & 87.73 & 97.89 & 29.41 & 92.63 & 96.50 & 40.20 & 88.45 & 97.02 & 38.04 & 89.28 & 96.91 \\ 
KNN & ViT & 29.46 & 94.07 & 98.64 & 72.15 & 83.88 & 95.76 & 74.17 & 81.47 & 95.36 & 51.21 & 87.18 & 98.06 & 45.25 & 91.49 & 96.81 & 56.75 & 86.65 & 96.96 & 54.45 & 87.62 & 96.93 \\ 
Energy & ViT & 12.64 & 97.34 & 99.40 & 48.05 & 86.47 & 96.33 & 56.41 & 82.22 & 94.93 & 46.79 & 85.72 & 97.47 & 28.33 & 93.31 & 97.04 & 40.97 & 87.94 & 97.03 & 38.44 & 89.01 & 97.03 \\ 
\rowcolor{Gray} 
NNGuide & ViT & 9.17 & 97.96 & 99.55 & 45.64 & 90.03 & 97.48 & 53.82 & 87.25 & 96.75 & 39.26 & 90.01 & 98.42 & 23.11 & 95.47 & 98.28 & \textbf{36.97} & \textbf{91.31} & \textbf{98.05} & \textbf{34.20} & \textbf{92.14} & \textbf{98.10} \\ 
\midrule
MSP & RegNet & 23.62 & 94.64 & 98.75 & 52.53 & 86.58 & 96.70 & 56.83 & 85.13 & 96.32 & 49.22 & 86.47 & 97.97 & 34.65 & 91.94 & 96.75 & 45.55 & 88.21 & 97.44 & 43.37 & 88.95 & 97.30 \\ 
MaxLogit & RegNet & 7.79 & 98.03 & 99.52 & 31.68 & 91.55 & 97.79 & 41.05 & 88.08 & 96.78 & 32.73 & 91.19 & 98.64 & 16.76 & 95.68 & 98.06 & 28.31 & 92.21 & 98.18 & 26.00 & 92.91 & 98.16 \\ 
KL & RegNet & 6.58 & 98.29 & 99.58 & 29.46 & 91.85 & 97.84 & 40.71 & 87.89 & 96.70 & 30.87 & 91.51 & 98.68 & 16.10 & 95.83 & 98.10 & 26.91 & 92.39 & 98.20 & 24.74 & 93.07 & 98.18 \\ 
ViM & RegNet & 1.97 & 99.52 & 99.90 & 28.19 & 93.15 & 98.30 & 42.72 & 89.05 & 97.26 & 20.53 & 95.58 & 99.40 & 13.55 & 97.15 & 98.87 & 23.35 & 94.33 & 98.72 & 21.39 & 94.89 & 98.74 \\ 
Mahalanobis & RegNet & 2.22 & 99.36 & 99.87 & 49.30 & 89.85 & 97.54 & 61.84 & 85.77 & 96.54 & 27.91 & 93.90 & 99.15 & 19.50 & 96.48 & 98.71 & 35.32 & 92.22 & 98.28 & 32.15 & 93.07 & 98.36 \\ 
SSD & RegNet & 5.09 & 98.82 & 99.76 & 60.33 & 85.88 & 96.55 & 70.87 & 80.27 & 95.08 & 38.14 & 92.58 & 99.01 & 28.75 & 93.54 & 97.43 & 43.61 & 89.39 & 97.60 & 40.64 & 90.22 & 97.57 \\ 
GradNorm & RegNet & 87.58 & 57.01 & 87.08 & 82.97 & 65.84 & 90.21 & 91.01 & 56.04 & 86.70 & 74.81 & 75.63 & 96.20 & 77.95 & 60.39 & 78.62 & 84.09 & 63.63 & 90.05 & 82.86 & 62.98 & 87.76 \\ 
KNN & RegNet & 4.30 & 98.76 & 99.73 & 46.12 & 88.45 & 96.69 & 56.28 & 85.15 & 96.11 & 28.33 & 91.93 & 98.72 & 21.26 & 95.51 & 98.28 & 33.76 & 91.07 & 97.81 & 31.26 & 91.96 & 97.91 \\ 
Energy & RegNet & 6.68 & 98.28 & 99.57 & 29.41 & 91.88 & 97.85 & 40.51 & 87.97 & 96.72 & 30.85 & 91.48 & 98.68 & 16.19 & 95.81 & 98.09 & 26.86 & 92.40 & 98.21 & 24.73 & 93.08 & 98.18 \\ 
\rowcolor{Gray} 
NNGuide & RegNet & 1.83 & 99.57 & 99.90 & 21.58 & 94.43 & 98.58 & 31.47 & 91.87 & 97.92 & 17.00 & 95.82 & 99.42 & 10.79 & 97.73 & 99.09 & \textbf{17.97} & \textbf{95.42} & \textbf{98.96} & \textbf{16.53} & \textbf{95.89} & \textbf{98.98 }\\ 
\bottomrule
\end{tabular}
}
\caption{
Results on ImageNet-1k (ID) across five different OODs (\ie iNaturalist, SUN, Places, Textures, OpenImage-O). 'Average-curated' corresponds to the the results averaged over iNaturalist, SUN, Places, and Textures. 
}
\label{table:result_in1k_supp}
\end{table*}

\subsection{Evaluation on ImageNet-1k}

\paragraph{Backbone models}
We evaluate detection scores on four different model architectures ResNet-50 \cite{he2016deep}, MobileNet \cite{sandler2018mobilenetv2}, ViT \cite{dosovitskiy2020image}, and RegNet \cite{radosavovic2020designing}. 
\begin{itemize}
\item 
We use ResNet-50 trained on ImageNet-1k from scratch. The model can be downloaded from \url{https://github.com/deeplearning-wisc/knn-ood} in `Pre-trained model'.
\item
We use MobileNet-v2 trained on ImageNet-1k from scratch. The model can be downloaded from \url{https://pytorch.org/vision/stable/models/generated/torchvision.models.mobilenet_v2.html#torchvision.models.MobileNet_V2_Weights}.
\item
We use ViT-B/16 pretrained on ImageNet-21k and then fine-tuned on ImageNet-1k with the full weight update. The model can be downloaded from \url{https://pytorch.org/vision/stable/models/generated/torchvision.models.vit_b_16.html#torchvision.models.ViT_B_16_Weights}.
\item
We use RegNet-Y-16GF pretrained on ImageNet-21k and then fine-tuned on ImageNet-1k with the full weight update. The model can be downloaded from \url{https://pytorch.org/vision/stable/models/generated/torchvision.models.regnet_y_16gf.html#torchvision.models.RegNet_Y_16GF_Weights}.
\end{itemize}

\subsubsection{Comparison of detection scores}

\paragraph{Results}
The result on the ImageNet-1k OOD benchmark is given in Tab.~\ref{table:result_in1k_supp}.

\begin{table*}[t]
\centering
\resizebox{.995\linewidth}{!}{
\begin{tabular}{l|lll|lll|lll|lll|lll|lll|lll}
\toprule
~  & \multicolumn{3}{c}{iNaturalist} & \multicolumn{3}{c}{SUN} & \multicolumn{3}{c}{Places} & \multicolumn{3}{c}{Textures} & \multicolumn{3}{c}{OpenImage-O} & \multicolumn{3}{c}{Average on curated OODs} & \multicolumn{3}{c}{Average on all} \\
Detection method & FPR95 & AUROC & AUPR & FPR95 & AUROC & AUPR & FPR95 & AUROC & AUPR & FPR95 & AUROC & AUPR & FPR95 & AUROC & AUPR & FPR95 & AUROC & AUPR & FPR95 & AUROC & AUPR \\ 
\midrule
ODIN* & 47.66 & 89.66 & - & 60.15 & 84.59 & - & 67.89 & 81.78 & - & 50.23 & 85.62 & - & - & - & - & 56.48 & 85.41 & - & - & - & - \\ 
GODIN* & 61.91 & 85.40 & - & 60.83 & 85.60 & - & 63.70 & 83.81 & - & 77.85 & 73.27 & - & - & - & - & 66.07 & 82.02 & - & - & - & - \\ 
DICE* & 25.63 & 94.49 & - & 35.15 & 90.83 & - & 46.49 & 87.48 & - & 31.72 & 90.30 & - & - & - & - & 34.75 & 90.78 & - & - & - & - \\ 
RankFeat* & 41.31 & 91.91 & - & 29.27 & 94.07 & - & 39.34 & 90.93 & - & 37.29 & 91.70 & - & - & - & - & 36.80 & 92.15 & - & - & - & - \\ 
BATS* & 12.57 & 97.67 & - & 22.62 & 95.33 & - & 34.34 & 91.83 & - & 38.90 & 92.27 & - & - & - & - & 27.11 & 94.28 & - & - & - & - \\ 
ASH* & 14.21 & 97.32 & - & 22.08 & 95.10 & - & 33.45 & 92.31 & - & 21.17 & 95.50 & - & - & - & - & 22.73 & 95.06 & - & - & - & - \\ 
ReAct* (+ Energy) & 20.38 & 96.22 & - & 24.20 & 94.20 & - & 33.85 & 91.58 & - & 47.30 & 89.80 & - & - & - & - & 31.43 & 92.95 & - & - & - & - \\
ReAct + MSP & 44.36 & 91.62 & 98.17 & 58.46 & 86.43 & 96.65 & 63.83 & 84.51 & 96.18 & 56.24 & 86.51 & 98.01 & 57.04 & 88.40 & 95.64 & 55.72 & 87.27 & 97.25 & 55.99 & 87.49 & 96.93 \\ 
ReAct + MaxLogit & 26.47 & 95.29 & 99.00 & 39.83 & 91.74 & 98.04 & 48.18 & 89.44 & 97.42 & 45.41 & 90.75 & 98.72 & 44.82 & 91.54 & 96.88 & 39.97 & 91.80 & 98.29 & 40.94 & 91.75 & 98.01 \\ 
ReAct + KL & 19.99 & 96.31 & 99.21 & 29.67 & 93.40 & 98.38 & 39.70 & 90.95 & 97.71 & 41.42 & 91.62 & 98.84 & 41.54 & 91.85 & 96.93 & 32.69 & 93.07 & 98.54 & 34.46 & 92.82 & 98.21\\ 
ReAct + ViM & 18.87 & 96.69 & 99.32 & 32.39 & 93.82 & 98.60 & 45.45 & 90.38 & 97.67 & 7.55 & 98.45 & 99.81 & 38.72 & 92.34 & 97.15 & 26.06 & 94.83 & 98.85 & 28.60 & 94.33 & 98.51 \\ 
ReAct + Mahalanobis & 44.96 & 91.44 & 98.14 & 62.50 & 84.64 & 96.37 & 73.04 & 79.23 & 94.80 & 11.10 & 97.78 & 99.72 & 55.98 & 85.90 & 94.33 & 47.90 & 88.27 & 97.26 & 49.52 & 87.80 & 96.67\\ 
ReAct + SSD & 56.83 & 86.00 & 96.65 & 71.06 & 79.56 & 94.79 & 80.38 & 73.08 & 92.67 & 16.40 & 96.45 & 99.53 & 65.70 & 78.79 & 90.01 & 56.17 & 83.77 & 95.91 & 58.07 & 82.78 & 94.73 \\ 
ReAct + GradNorm & 14.88 & 97.01 & 99.33 & 25.54 & 94.19 & 98.57 & 36.49 & 91.12 & 97.74 & 23.60 & 94.57 & 99.23 & 39.46 & 89.67 & 95.46 & 25.13 & 94.22 & 98.72 & 27.99 & 93.31 & 98.07 \\ 
ReAct + KNN & 37.05 & 93.03 & 98.49 & 55.73 & 86.11 & 96.58 & 67.99 & 80.70 & 95.04 & 8.92 & 98.01 & 99.74 & 53.02 & 88.26 & 95.43 & 42.42 & 89.46 & 97.46 & 44.54 & 89.22 & 97.06 \\ 
ReAct + Energy & 19.99 & 96.31 & 99.21 & 29.67 & 93.40 & 98.38 & 39.70 & 90.95 & 97.71 & 41.42 & 91.62 & 98.84 & 41.54 & 91.85 & 96.93 & 32.69 & 93.07 & 98.54 & 34.46 & 92.82 & 98.21 \\ 
\rowcolor{Gray}
ReAct + NNGuide & 11.12 & 97.70 & 99.50 & 20.51 & 95.26 & 98.83 & 29.99 & 92.70 & 98.13 & 17.27 & 96.11 & 99.46 & 35.10 & 92.49 & 97.09 & \textbf{19.72} & \textbf{95.45} & \textbf{98.98} & \textbf{22.80} & \textbf{94.85} & \textbf{98.60} \\ 
\bottomrule
\end{tabular}
}
\caption{
Results on ImageNet-1k with the network truncators using ResNet-50. The 'curated OODs' are the datasets of iNaturalist, SUN, Places, and Textures.
}
\label{table:result_in1k_react_supp}
\end{table*}

\subsubsection{Compatibility with network truncator}

\subsection{Evaluation against natural distribution shift (ImageNet-1k-V2)}

\paragraph{Dataset}
ImageNet-1k-V2 \cite{taori2020measuring,recht2019imagenet} contains samples of the same semantic classes as those of original ImageNet-1k. Due to the different data collecting schemes applied on ImageNet-1k-V2, the data experiences natural distribution shift. The dataset involves three different data folds. Their differing characteristics are determined by how they are collected. 
We evaluate the performance by combining the different folds.

\paragraph{Backbone models}
We use the same backbone models used in the ImageNet-1k benchmark.

\paragraph{Setup}
The model is trained on the train fold of original ImageNet-1k. Then, during testing, the test set ID is the combined ImageNet-1k-V2 folds, and the test OOD is either of five different OODs (\ie, iNaturalist, SUN, Places, Textures, and OpenImage-O).

\paragraph{Results}
The results on the ImageNet-1k-V2 are given in Tab.~\ref{table:result_in1kv2_supp}.

\begin{table*}[t]
\centering
\resizebox{.995\linewidth}{!}{
\begin{tabular}{ll|lll|lll|lll|lll|lll|lll}
\toprule
~ & OOD & \multicolumn{3}{c}{iNaturalist} & \multicolumn{3}{c}{SUN} & \multicolumn{3}{c}{Places} & \multicolumn{3}{c}{Textures} & \multicolumn{3}{c}{OpenImage-O} & \multicolumn{3}{c}{Average} \\
Method & Model & FPR95 & AUROC & AUPR & FPR95 & AUROC & AUPR & FPR95 & AUROC & AUPR & FPR95 & AUROC & AUPR & FPR95 & AUROC & AUPR & FPR95 & AUROC & AUPR \\
\midrule
MSP & ResNet-50 & 36.31 & 92.39 & 92.64 & 65.44 & 82.07 & 80.12 & 66.23 & 81.69 & 81.13 & 55.35 & 82.73 & 87.05 & 54.39 & 87.64 & 81.91 & 55.54 & 85.30 & 84.57 \\ 
MaxLogit & ResNet-50 & 28.51 & 94.85 & 95.41 & 56.70 & 86.13 & 85.10 & 59.06 & 84.89 & 84.27 & 47.15 & 86.49 & 89.03 & 48.43 & 90.35 & 86.24 & 47.97 & 88.54 & 88.01 \\ 
KL & ResNet-50 & 26.39 & 95.11 & 95.62 & 51.87 & 86.78 & 85.45 & 55.50 & 85.34 & 84.49 & 42.84 & 87.12 & 89.30 & 46.78 & 90.51 & 86.34 & 44.67 & 88.97 & 88.24 \\ 
ViM & ResNet-50 & 20.34 & 96.24 & 96.61 & 43.10 & 89.39 & 87.86 & 52.86 & 86.61 & 85.54 & 17.18 & 93.58 & 93.48 & 32.06 & 93.55 & 90.25 & 33.11 & \textbf{91.88} & \textbf{90.75} \\ 
Mahalanobis & ResNet-50 & 39.90 & 93.76 & 94.89 & 68.36 & 84.35 & 84.93 & 73.32 & 81.46 & 81.94 & 16.05 & 94.96 & 96.11 & 41.63 & 92.73 & 90.40 & 47.85 & 89.45 & 89.65 \\ 
SSD & ResNet-50 & 36.68 & 93.81 & 94.74 & 58.82 & 86.86 & 86.82 & 67.51 & 82.67 & 82.44 & 12.38 & 96.19 & 97.02 & 40.66 & 92.30 & 89.44 & 43.21 & 90.36 & 90.09 \\ 
GradNorm & ResNet-50 & 16.75 & 96.61 & 96.83 & 32.99 & 92.21 & 91.51 & 41.85 & 89.63 & 88.59 & 31.87 & 90.78 & 92.31 & 39.83 & 90.12 & 83.85 & 32.66 & 91.87 & 90.62 \\ 
KNN & ResNet-50 & 39.81 & 93.07 & 93.77 & 56.63 & 85.71 & 85.29 & 65.30 & 82.05 & 81.33 & 17.91 & 93.69 & 94.82 & 43.20 & 91.56 & 87.98 & 44.57 & 89.22 & 88.64 \\ 
Energy & ResNet-50 & 26.38 & 95.11 & 95.62 & 51.88 & 86.78 & 85.45 & 55.50 & 85.34 & 84.49 & 42.85 & 87.12 & 89.30 & 46.78 & 90.51 & 86.34 & 44.68 & 88.97 & 88.24 \\ 
\rowcolor{Gray}
NNGuide & ResNet-50 & 14.27 & 96.89 & 96.80 & 34.56 & 90.32 & 87.95 & 42.23 & 88.48 & 86.68 & 27.38 & 90.44 & 91.04 & 35.44 & 92.35 & 87.91 & \textbf{30.78} & 91.70 & 90.08 \\ 
\midrule
MSP & MobileNet & 75.68 & 81.49 & 83.19 & 83.66 & 73.43 & 73.93 & 83.46 & 73.15 & 73.75 & 76.32 & 75.80 & 82.79 & 77.91 & 79.35 & 72.03 & 79.40 & 76.64 & 77.14 \\ 
MaxLogit & MobileNet & 78.71 & 79.15 & 80.55 & 84.84 & 71.81 & 71.85 & 84.48 & 71.64 & 71.80 & 75.50 & 75.00 & 81.86 & 77.43 & 78.23 & 69.24 & 80.19 & 75.17 & 75.06 \\ 
KL & MobileNet & 92.70 & 64.99 & 69.21 & 95.02 & 62.87 & 65.74 & 93.86 & 63.39 & 66.00 & 81.31 & 69.24 & 78.25 & 84.95 & 70.36 & 60.39 & 89.57 & 66.17 & 67.92 \\ 
ViM & MobileNet & 87.49 & 68.06 & 70.28 & 89.03 & 64.78 & 65.70 & 92.47 & 60.69 & 62.33 & 41.55 & 89.07 & 91.93 & 73.97 & 78.99 & 70.57 & 76.90 & 72.32 & 72.16 \\ 
Mahalanobis & MobileNet & 67.52 & 81.74 & 82.69 & 80.25 & 71.24 & 71.39 & 86.72 & 66.12 & 66.61 & 32.29 & 92.21 & 94.38 & 59.56 & 85.56 & 79.29 & \textbf{65.27} & 79.37 & 78.87 \\ 
SSD & MobileNet & 85.87 & 64.66 & 63.62 & 88.03 & 62.20 & 61.47 & 93.03 & 54.31 & 54.69 & 40.74 & 90.66 & 94.19 & 78.53 & 74.76 & 64.32 & 77.24 & 69.32 & 67.66 \\ 
GradNorm & MobileNet & 94.32 & 62.01 & 65.30 & 93.74 & 62.41 & 63.90 & 95.28 & 59.31 & 61.44 & 78.67 & 73.00 & 82.00 & 87.87 & 65.77 & 53.72 & 89.98 & 64.50 & 65.27 \\ 
KNN & MobileNet & 83.08 & 74.75 & 77.98 & 92.08 & 64.94 & 67.67 & 93.23 & 60.90 & 63.73 & 37.51 & 90.44 & 92.79 & 71.57 & 80.62 & 73.82 & 75.49 & 74.33 & 75.20 \\ 
Energy & MobileNet & 92.69 & 64.99 & 69.21 & 95.02 & 62.87 & 65.74 & 93.86 & 63.39 & 66.00 & 81.31 & 69.24 & 78.25 & 84.95 & 70.36 & 60.39 & 89.57 & 66.17 & 67.92 \\ 
\rowcolor{Gray}
NNGuide & MobileNet & 70.18 & 80.16 & 81.41 & 81.27 & 73.81 & 74.61 & 83.59 & 71.86 & 72.26 & 40.65 & 88.28 & 90.77 & 63.31 & 82.89 & 75.60 & 67.80 & \textbf{79.40} & \textbf{78.93} \\
\midrule
MSP & ViT & 33.17 & 92.32 & 92.40 & 63.43 & 81.97 & 82.01 & 66.79 & 80.97 & 81.06 & 58.90 & 81.96 & 87.14 & 50.76 & 87.43 & 81.09 & 54.61 & 84.93 & 84.74 \\ 
MaxLogit & ViT & 19.72 & 95.88 & 95.86 & 56.13 & 83.15 & 81.28 & 62.67 & 79.01 & 76.22 & 53.64 & 82.71 & 86.32 & 37.69 & 90.84 & 84.06 & 45.97 & 86.32 & 84.75 \\ 
KL & ViT & 19.33 & 95.99 & 95.98 & 57.33 & 82.69 & 80.81 & 64.98 & 77.90 & 75.17 & 55.74 & 82.12 & 86.03 & 37.79 & 90.83 & 84.08 & 47.03 & 85.91 & 84.41 \\ 
ViM & ViT & 4.34 & 98.96 & 98.91 & 55.11 & 85.81 & 84.50 & 64.44 & 80.93 & 79.21 & 47.21 & 86.47 & 89.91 & 24.53 & 94.77 & 91.17 & \textbf{39.13} & 89.39 & 88.74 \\ 
Mahalanobis & ViT & 6.30 & 98.58 & 98.48 & 62.70 & 86.11 & 86.09 & 69.17 & 83.13 & 83.65 & 48.42 & 88.66 & 92.61 & 28.34 & 94.97 & 92.95 & 42.99 & \textbf{90.29} & \textbf{90.76} \\ 
SSD & ViT & 10.31 & 97.66 & 97.61 & 83.16 & 72.06 & 72.34 & 87.18 & 66.35 & 67.51 & 67.40 & 80.75 & 86.47 & 43.36 & 89.25 & 83.03 & 58.28 & 81.21 & 81.39 \\ 
GradNorm & ViT & 21.85 & 95.06 & 94.45 & 56.94 & 82.79 & 79.68 & 66.08 & 78.35 & 75.14 & 53.30 & 84.18 & 87.96 & 39.91 & 89.97 & 81.55 & 47.62 & 86.07 & 83.75 \\ 
KNN & ViT & 33.94 & 93.34 & 93.60 & 75.12 & 82.53 & 81.79 & 76.53 & 80.05 & 80.60 & 54.72 & 86.11 & 90.64 & 49.59 & 90.54 & 87.04 & 57.98 & 86.52 & 86.73 \\ 
Energy & ViT & 19.33 & 95.99 & 95.98 & 57.33 & 82.69 & 80.81 & 64.98 & 77.90 & 75.17 & 55.74 & 82.12 & 86.03 & 37.79 & 90.83 & 84.08 & 47.03 & 85.91 & 84.41 \\
\rowcolor{Gray} 
NNGuide & ViT & 13.96 & 97.18 & 97.25 & 54.73 & 87.42 & 86.80 & 61.47 & 84.18 & 83.57 & 46.86 & 87.68 & 91.11 & 31.65 & 93.92 & 91.00 & 41.73 & 90.08 & 89.95 \\
\midrule
MSP & RegNet & 28.48 & 93.13 & 92.83 & 58.44 & 83.54 & 83.03 & 62.95 & 81.83 & 81.41 & 54.71 & 83.54 & 88.58 & 40.37 & 89.88 & 83.82 & 48.99 & 86.38 & 85.93 \\ 
MaxLogit & RegNet & 12.18 & 96.84 & 96.28 & 41.26 & 87.68 & 85.59 & 51.19 & 83.14 & 80.32 & 42.35 & 87.25 & 90.56 & 23.88 & 93.47 & 87.12 & 34.17 & 89.67 & 87.97 \\ 
KL & RegNet & 10.59 & 97.19 & 96.59 & 38.75 & 87.91 & 85.54 & 50.02 & 82.62 & 79.54 & 41.26 & 87.40 & 90.63 & 22.81 & 93.56 & 87.03 & 32.69 & 89.74 & 87.87 \\ 
ViM & RegNet & 3.38 & 99.22 & 99.19 & 37.13 & 90.39 & 89.18 & 52.27 & 85.02 & 83.67 & 28.39 & 93.58 & 95.77 & 18.93 & 95.78 & 92.62 & 28.02 & 92.80 & 92.09 \\ 
Mahalanobis & RegNet & 3.08 & 99.14 & 99.26 & 55.81 & 87.90 & 87.75 & 67.00 & 83.27 & 83.56 & 33.83 & 92.67 & 95.25 & 23.19 & 95.64 & 93.55 & 36.58 & 91.72 & 91.87 \\ 
SSD & RegNet & 6.45 & 98.52 & 98.70 & 65.12 & 83.77 & 83.83 & 74.48 & 77.58 & 78.13 & 43.07 & 91.30 & 94.74 & 31.84 & 92.45 & 87.86 & 44.19 & 88.73 & 88.65 \\ 
GradNorm & RegNet & 88.78 & 54.91 & 55.98 & 84.73 & 63.85 & 63.64 & 92.11 & 53.86 & 54.84 & 77.46 & 73.88 & 82.86 & 79.28 & 58.61 & 40.45 & 84.47 & 61.02 & 59.56 \\ 
KNN & RegNet & 4.82 & 98.67 & 98.67 & 49.88 & 87.55 & 84.99 & 59.94 & 84.05 & 83.05 & 30.42 & 91.36 & 93.64 & 23.63 & 95.08 & 92.34 & 33.74 & 91.34 & 90.54 \\ 
Energy & RegNet & 10.80 & 97.16 & 96.57 & 38.82 & 87.95 & 85.61 & 50.09 & 82.73 & 79.70 & 41.57 & 87.37 & 90.61 & 23.01 & 93.54 & 86.98 & 32.86 & 89.75 & 87.89 \\
\rowcolor{Gray} 
NNGuide & RegNet & 2.95 & 99.32 & 99.22 & 28.56 & 92.20 & 90.88 & 39.10 & 88.81 & 87.30 & 23.85 & 93.96 & 95.91 & 15.38 & 96.56 & 93.87 & \textbf{21.97} & \textbf{94.17} & \textbf{93.44} \\ 
\bottomrule
\end{tabular}
}
\caption{
Results on ImageNet-1k-V2 (ID) across five different OODs (\ie iNaturalist, SUN, Places, Textures, OpenImage-O).
}
\label{table:result_in1kv2_supp}
\end{table*}

\begin{table*}[t]
\centering
\resizebox{.995\linewidth}{!}{
\begin{tabular}{l|lll|lll|lll|lll|lll|lll}
\toprule
~ & \multicolumn{3}{c}{CIFAR-100} & \multicolumn{3}{c}{SVHN} & \multicolumn{3}{c}{LSUN} & \multicolumn{3}{c}{iSUN} & \multicolumn{3}{c}{ImageNet} & \multicolumn{3}{c}{Average} \\ 
~ & FPR95 & AUROC & AUPR & FPR95 & AUROC & AUPR & FPR95 & AUROC & AUPR & FPR95 & AUROC & AUPR & FPR95 & AUROC & AUPR & FPR95 & AUROC & AUPR \\ 
\midrule
MSP & 82.25 & 77.28 & 79.96 & 64.72 & 86.24 & 78.30 & 71.45 & 83.21 & 85.75 & 71.31 & 83.01 & 86.69 & 72.95 & 81.98 & 84.53 & 72.53 & 82.34 & 83.05 \\ 
MaxLogit & 82.06 & 77.95 & 80.07 & 59.83 & 88.87 & 82.24 & 64.53 & 86.84 & 88.79 & 65.87 & 86.62 & 89.51 & 67.30 & 85.42 & 87.42 & 67.92 & 85.14 & 85.61 \\ 
KL & 82.57 & 77.75 & 79.95 & 59.39 & 89.10 & 82.53 & 61.86 & 87.41 & 89.15 & 62.41 & 87.19 & 89.83 & 65.14 & 85.91 & 87.71 & 66.27 & 85.47 & 85.84 \\ 
ViM & 87.57 & 71.04 & 73.85 & 74.78 & 81.65 & 72.62 & 81.01 & 78.13 & 81.55 & 77.84 & 79.21 & 83.58 & 79.18 & 78.45 & 81.53 & 80.07 & 77.70 & 78.63 \\ 
Mahalanobis & 87.61 & 70.50 & 71.40 & 75.90 & 81.69 & 72.97 & 75.29 & 81.53 & 84.44 & 73.86 & 80.90 & 84.63 & 75.05 & 79.80 & 82.47 & 77.54 & 78.88 & 79.18 \\ 
SSD & 90.85 & 60.32 & 59.83 & 83.19 & 70.20 & 49.35 & 82.50 & 71.80 & 73.08 & 80.29 & 71.33 & 73.74 & 79.21 & 71.59 & 71.59 & 83.21 & 69.05 & 65.52 \\ 
GradNorm & 83.16 & 68.15 & 66.22 & 51.85 & 86.12 & 71.58 & 65.78 & 76.58 & 73.68 & 64.69 & 78.20 & 77.90 & 69.01 & 74.50 & 71.30 & 66.90 & 76.71 & 72.14 \\ 
KNN & 82.79 & 76.24 & 75.43 & 65.48 & 87.21 & 80.51 & 65.96 & 86.62 & 88.94 & 67.15 & 85.44 & 88.45 & 68.99 & 85.35 & 87.82 & 70.07 & 84.17 & 84.23 \\ 
Energy & 82.57 & 77.75 & 79.95 & 59.39 & 89.10 & 82.53 & 61.86 & 87.41 & 89.15 & 62.41 & 87.19 & 89.83 & 65.14 & 85.91 & 87.71 & 66.27 & 85.47 & 85.84 \\ 
\rowcolor{Gray}
NNGuide & 81.74 & 78.34 & 80.46 & 52.04 & 90.93 & 85.25 & 61.30 & 88.02 & 89.78 & 62.08 & 87.84 & 90.50 & 65.61 & 86.82 & 88.82 & \textbf{64.56} & \textbf{86.39} & \textbf{86.96} \\ 
\bottomrule
\end{tabular}
}
\caption{
Results on CIFAR-100.
}
\label{table:result_cifar100_supp}
\end{table*}

\subsection{Evaluation on the CIFAR-100 benchmark}

\paragraph{Backbone model}
A standard ResNet-18 model with the default PyTorch configuration is trained on the train fold of CIFAR-100 from scratch for 200 epochs with the SGD optimizer. We select the best model by the validation set accuracy.

\paragraph{Results}
The results on the CIFAR-100 OOD benchmark is given in Tab.~\ref{table:result_cifar100_supp}.

\begin{table*}[t]
\centering
\resizebox{.995\linewidth}{!}{
\begin{tabular}{ll|lll|lll|lll|lll|lll|lll}
\toprule
~ & OOD & \multicolumn{3}{c}{iNaturalist} & \multicolumn{3}{c}{SUN} & \multicolumn{3}{c}{Places} & \multicolumn{3}{c}{Textures} & \multicolumn{3}{c}{OpenImage-O} & \multicolumn{3}{c}{Average} \\
Components & Model & FPR95 & AUROC & AUPR & FPR95 & AUROC & AUPR & FPR95 & AUROC & AUPR & FPR95 & AUROC & AUPR & FPR95 & AUROC & AUPR & FPR95 & AUROC & AUPR \\
\midrule
KNN & ResNet-50 & 37.53 & 93.86 & 98.67 & 54.57 & 86.98 & 96.71 & 63.34 & 83.54 & 95.68 & 17.45 & 94.13 & 99.00 & 40.77 & 92.46 & 97.15 & 42.73 & 90.19 & 97.44 \\ 
KNN with average similarity & ResNet-50 & 40.80 & 93.82 & 98.71 & 54.65 & 87.84 & 96.91 & 62.34 & 85.13 & 96.14 & 19.84 & 93.91 & 98.99 & 42.16 & 92.87 & 97.42 & 43.96 & 90.71 & 97.64 \\ 
Energy & ResNet-50 & 20.98 & 96.17 & 99.19 & 47.05 & 88.91 & 97.08 & 51.15 & 87.70 & 96.85 & 39.31 & 88.90 & 97.96 & 41.56 & 92.32 & 97.06 & 40.01 & 90.80 & 97.63 \\ 
Product fusion & ResNet-50 & 18.16 & 96.64 & 99.29 & 43.62 & 89.86 & 97.27 & 50.33 & 88.07 & 96.91 & 23.58 & 92.26 & 98.42 & 31.98 & 94.02 & 97.73 & 33.53 & 92.17 & 97.92 \\ 
Sum fusion & ResNet-50 & 21.51 & 96.38 & 99.24 & 44.61 & 89.87 & 97.30 & 52.44 & 87.78 & 96.85 & 20.14 & 93.05 & 98.55 & 31.43 & 94.26 & 97.83 & 34.03 & 92.27 & 97.96 \\ 
Max fusion & ResNet-50 & 37.53 & 93.87 & 98.67 & 54.55 & 86.98 & 96.71 & 63.33 & 83.54 & 95.69 & 17.45 & 94.13 & 98.97 & 40.76 & 92.46 & 97.15 & 42.72 & 90.19 & 97.44 \\ 
Min fusion & ResNet-50 & 20.98 & 96.17 & 99.19 & 47.05 & 88.91 & 97.08 & 51.15 & 87.70 & 96.85 & 39.31 & 88.90 & 97.96 & 41.56 & 92.32 & 97.06 & 40.01 & 90.80 & 97.63 \\ 
Mahalanobis guidance & ResNet-50 & 21.35 & 96.33 & 99.25 & 53.52 & 88.72 & 97.15 & 59.53 & 86.76 & 96.69 & 19.04 & 93.84 & 98.81 & 32.49 & 94.36 & 97.93 & 37.19 & 92.00 & 97.97 \\ 
Guidance term only & ResNet-50 & 18.22 & 95.83 & 98.97 & 30.46 & 91.16 & 97.37 & 39.11 & 88.86 & 96.82 & 23.23 & 92.34 & 98.39 & 40.54 & 89.71 & 95.55 & 30.31 & 91.58 & 97.42 \\ 
W/O confidence scaling & ResNet-50 & 22.12 & 96.22 & 99.21 & 46.09 & 89.47 & 97.22 & 51.63 & 87.82 & 96.88 & 25.39 & 91.86 & 98.40 & 35.91 & 93.66 & 97.62 & 36.23 & 91.81 & 97.87 \\ 
\rowcolor{Gray}
NNGuide & ResNet-50 & 12.02 & 97.47 & 99.43 & 31.62 & 91.66 & 97.63 & 38.88 & 90.12 & 97.34 & 24.93 & 91.52 & 98.27 & 31.60 & 93.66 & 97.47 & \textbf{27.81} & \textbf{92.89} & \textbf{98.03} \\ 
\midrule
KNN & RegNet & 4.30 & 98.76 & 99.73 & 46.12 & 88.45 & 96.69 & 56.28 & 85.15 & 96.11 & 28.33 & 91.93 & 98.72 & 21.26 & 95.51 & 98.28 & 31.26 & 91.96 & 97.91 \\ 
KNN with average similarity & RegNet & 3.24 & 99.28 & 99.83 & 37.68 & 90.59 & 97.32 & 46.79 & 87.91 & 96.84 & 24.45 & 93.00 & 98.87 & 14.79 & 97.06 & 98.85 & 25.39 & 93.57 & 98.34 \\ 
Energy & RegNet & 6.68 & 98.28 & 99.57 & 29.41 & 91.88 & 97.85 & 40.51 & 87.97 & 96.72 & 30.85 & 91.48 & 98.68 & 16.19 & 95.81 & 98.09 & 24.73 & 93.08 & 98.18 \\ 
Product fusion & RegNet & 2.51 & 99.44 & 99.87 & 25.98 & 93.43 & 98.21 & 36.91 & 90.50 & 97.54 & 20.25 & 95.07 & 99.26 & 10.49 & 97.97 & 99.21 & 19.23 & 95.28 & 98.82 \\ 
Sum fusion & RegNet & 3.55 & 99.20 & 99.81 & 24.74 & 93.69 & 98.39 & 35.67 & 90.54 & 97.53 & 23.16 & 94.44 & 99.20 & 10.99 & 97.55 & 98.97 & 19.62 & 95.08 & 98.78 \\ 
Max fusion & RegNet & 4.30 & 98.76 & 99.73 & 46.09 & 88.46 & 96.69 & 56.26 & 85.17 & 96.11 & 28.32 & 91.94 & 98.73 & 21.25 & 95.52 & 98.28 & 31.24 & 91.97 & 97.91 \\ 
Min fusion & RegNet & 6.68 & 98.28 & 99.57 & 29.41 & 91.88 & 97.85 & 40.51 & 87.97 & 96.72 & 30.85 & 91.48 & 98.68 & 16.19 & 95.81 & 98.09 & 24.73 & 93.08 & 98.18 \\ 
Mahalanobis guidance & RegNet & 1.24 & 99.61 & 99.92 & 34.07 & 92.65 & 98.23 & 47.87 & 88.98 & 97.31 & 19.77 & 95.37 & 99.36 & 12.45 & 97.70 & 99.16 & 23.08 & 94.86 & 98.80 \\ 
Guidance term only & RegNet & 2.27 & 99.43 & 99.86 & 30.68 & 91.71 & 97.66 & 40.14 & 89.34 & 97.20 & 19.59 & 94.21 & 99.11 & 18.82 & 95.78 & 98.27 & 22.30 & 94.09 & 98.42 \\ 
W/O confidence scaling & RegNet & 2.35 & 99.50 & 99.89 & 23.99 & 94.09 & 98.46 & 34.06 & 91.50 & 97.83 & 19.84 & 95.23 & 99.31 & 8.68 & 98.30 & 99.35 & 17.78 & 95.72 & 98.97 \\ 
\rowcolor{Gray}
NNGuide & RegNet & 1.83 & 99.57 & 99.90 & 21.58 & 94.43 & 98.58 & 31.47 & 91.87 & 97.92 & 17.00 & 95.82 & 99.42 & 10.79 & 97.73 & 99.09 & \textbf{16.53} & \textbf{95.89} & \textbf{98.98} \\
\bottomrule
\end{tabular}
}
\caption{
Ablation study on the components of NNGuide. The ID is ImageNet-1k.
}
\label{table:ablation_components_suppv1}
\end{table*}

\begin{table*}[t]
\centering
\resizebox{.995\linewidth}{!}{
\begin{tabular}{ll|lll|lll|lll|lll|lll|lll}
\toprule
~ & OOD & \multicolumn{3}{c}{iNaturalist} & \multicolumn{3}{c}{SUN} & \multicolumn{3}{c}{Places} & \multicolumn{3}{c}{Textures} & \multicolumn{3}{c}{OpenImage-O} & \multicolumn{3}{c}{Average} \\
Components & Model & FPR95 & AUROC & AUPR & FPR95 & AUROC & AUPR & FPR95 & AUROC & AUPR & FPR95 & AUROC & AUPR & FPR95 & AUROC & AUPR & FPR95 & AUROC & AUPR \\
\midrule
KNN & ResNet-50 & 39.81 & 93.07 & 93.77 & 56.63 & 85.71 & 85.29 & 65.30 & 82.05 & 81.33 & 17.91 & 93.69 & 94.82 & 43.20 & 91.56 & 87.98 & 44.57 & 89.22 & 88.64 \\ 
KNN with average similarity & ResNet-50 & 44.61 & 92.66 & 93.80 & 58.47 & 86.10 & 85.70 & 65.82 & 83.14 & 82.60 & 21.11 & 93.24 & 94.67 & 46.01 & 91.61 & 88.80 & 47.20 & 89.35 & 89.11 \\ 
Energy & ResNet-50 & 26.38 & 95.11 & 95.62 & 51.88 & 86.78 & 85.45 & 55.50 & 85.34 & 84.49 & 42.85 & 87.12 & 89.30 & 46.78 & 90.51 & 86.34 & 44.68 & 88.97 & 88.24 \\ 
Product fusion & ResNet-50 & 23.45 & 95.79 & 96.20 & 48.01 & 88.03 & 86.44 & 54.37 & 85.93 & 84.91 & 25.31 & 91.25 & 91.69 & 37.77 & 92.68 & 89.23 & 37.78 & 90.74 & 89.69 \\ 
Sum fusion & ResNet-50 & 26.44 & 95.55 & 96.05 & 48.58 & 88.14 & 86.72 & 56.06 & 85.72 & 84.84 & 21.55 & 92.24 & 92.38 & 36.49 & 93.06 & 89.81 & 37.82 & 90.94 & 89.96 \\ 
Max fusion & ResNet-50 & 39.81 & 93.07 & 93.77 & 56.63 & 85.71 & 85.30 & 65.30 & 82.05 & 81.33 & 17.91 & 93.69 & 94.69 & 43.20 & 91.56 & 87.98 & 44.57 & 89.22 & 88.61 \\ 
Min fusion & ResNet-50 & 26.38 & 95.11 & 95.62 & 51.88 & 86.78 & 85.45 & 55.50 & 85.34 & 84.49 & 42.85 & 87.12 & 89.30 & 46.78 & 90.51 & 86.34 & 44.68 & 88.97 & 88.24 \\ 
Mahalanobis guidance & ResNet-50 & 26.57 & 95.46 & 96.15 & 57.85 & 86.62 & 86.15 & 63.44 & 84.35 & 84.14 & 20.16 & 93.03 & 93.60 & 37.02 & 93.10 & 90.29 & 41.01 & 90.51 & 90.07 \\ 
Guidance term only & ResNet-50 & 20.15 & 95.41 & 94.93 & 32.50 & 90.55 & 87.73 & 41.43 & 88.10 & 85.54 & 24.97 & 91.89 & 92.10 & 42.83 & 88.93 & 81.63 & 32.37 & 90.98 & 88.38 \\ 
W/O confidence scaling & ResNet-50 & 28.12 & 95.23 & 95.85 & 50.65 & 87.52 & 86.23 & 56.27 & 85.58 & 84.78 & 27.54 & 90.73 & 91.52 & 41.89 & 92.18 & 88.82 & 40.89 & 90.25 & 89.44 \\ 
\rowcolor{Gray}
NNGuide & ResNet-50 & 14.27 & 96.89 & 96.80 & 34.56 & 90.32 & 87.95 & 42.23 & 88.48 & 86.68 & 27.38 & 90.44 & 91.04 & 35.44 & 92.35 & 87.91 & \textbf{30.78} & \textbf{91.70} & \textbf{90.08} \\ 
\midrule
KNN & RegNet & 4.82 & 98.67 & 98.67 & 49.88 & 87.55 & 84.99 & 59.94 & 84.05 & 83.05 & 30.42 & 91.36 & 93.64 & 23.63 & 95.08 & 92.34 & 33.74 & 91.34 & 90.54 \\ 
KNN with average similarity & RegNet & 3.99 & 99.14 & 99.05 & 42.51 & 89.34 & 87.03 & 51.81 & 86.42 & 85.28 & 27.44 & 92.17 & 94.04 & 17.45 & 96.51 & 94.25 & 28.64 & 92.72 & 91.93 \\ 
Energy & RegNet & 10.80 & 97.16 & 96.57 & 38.82 & 87.95 & 85.61 & 50.09 & 82.73 & 79.70 & 41.57 & 87.37 & 90.61 & 23.01 & 93.54 & 86.98 & 32.86 & 89.75 & 87.89 \\ 
Product fusion & RegNet & 3.79 & 99.15 & 99.05 & 33.50 & 91.18 & 89.16 & 45.05 & 87.41 & 85.87 & 27.73 & 93.23 & 95.15 & 15.07 & 96.95 & 94.86 & 25.03 & 93.58 & 92.82 \\ 
Sum fusion & RegNet & 5.62 & 98.63 & 98.44 & 33.27 & 90.62 & 89.04 & 45.00 & 86.35 & 84.20 & 32.06 & 91.58 & 94.12 & 16.77 & 96.04 & 92.60 & 26.55 & 92.64 & 91.68 \\ 
Max fusion & RegNet & 4.82 & 98.67 & 98.67 & 49.88 & 87.55 & 85.00 & 59.94 & 84.06 & 83.06 & 30.42 & 91.37 & 93.64 & 23.63 & 95.08 & 92.35 & 33.74 & 91.35 & 90.54 \\ 
Min fusion & RegNet & 10.80 & 97.16 & 96.57 & 38.82 & 87.95 & 85.61 & 50.09 & 82.73 & 79.70 & 41.57 & 87.37 & 90.61 & 23.01 & 93.54 & 86.98 & 32.86 & 89.75 & 87.89 \\ 
Mahalanobis guidance & RegNet & 1.96 & 99.39 & 99.44 & 43.34 & 90.25 & 89.77 & 56.84 & 85.71 & 85.30 & 26.83 & 93.78 & 95.92 & 17.22 & 96.73 & 94.99 & 29.24 & 93.17 & 93.09 \\ 
Guidance term only & RegNet & 2.74 & 99.32 & 99.21 & 34.15 & 90.65 & 88.41 & 43.91 & 88.01 & 86.62 & 22.29 & 93.48 & 95.26 & 21.10 & 95.13 & 91.53 & 24.84 & 93.32 & 92.21 \\ 
W/O confidence scaling & RegNet & 3.79 & 99.19 & 99.11 & 32.71 & 91.72 & 90.27 & 43.47 & 88.32 & 86.95 & 27.65 & 93.23 & 95.27 & 13.81 & 97.29 & 95.50 & 24.29 & 93.95 & 93.42 \\ 
\rowcolor{Gray}
NNGuide & RegNet & 2.95 & 99.32 & 99.22 & 28.56 & 92.20 & 90.88 & 39.10 & 88.81 & 87.30 & 23.85 & 93.96 & 95.91 & 15.38 & 96.56 & 93.87 & \textbf{21.97} & \textbf{94.17} & \textbf{93.44} \\
\bottomrule
\end{tabular}
}
\caption{
Ablation study on the components of NNGuide. The ID is ImageNet-1k-V2, where test ID samples undergo natural distribution shift.
}
\label{table:ablation_components_suppv2}
\end{table*}

\subsection{Ablation}

\subsubsection{Compatibility to other classifier-based confidence scores}

\begin{figure*}[t]
\centering
\includegraphics[width=.995\linewidth]{./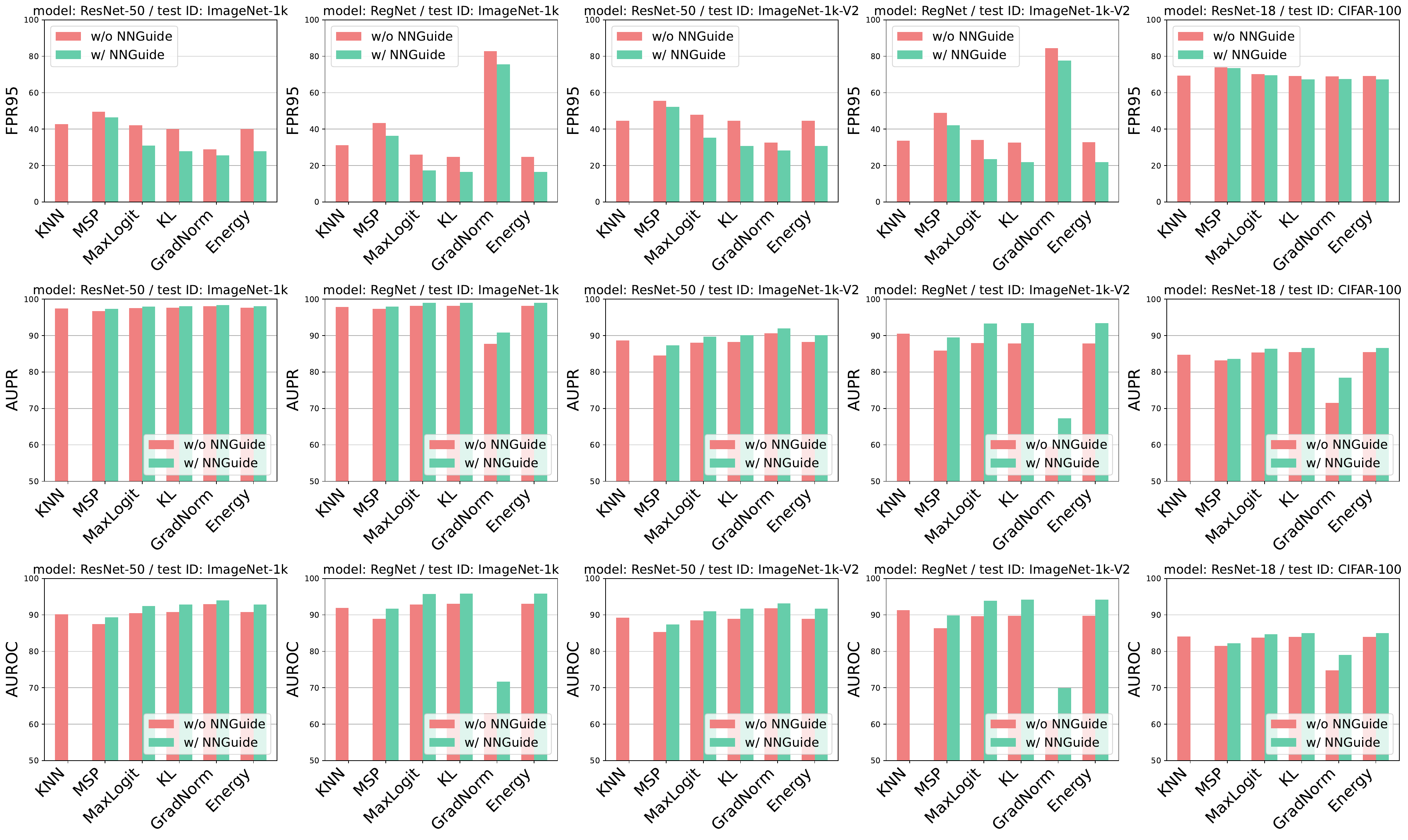}
\caption{
The compatibility to other classifier-based scores. The average performance across five different OODs is reported.
}
\label{fig:compatibility_other_confs_supp}
\end{figure*}

\paragraph{Note}
Note that the all confidence scores we use have their range in $[0, \infty)$. Particularly, the MaxLogit and Energy confidence scores satisfy this range as long as the maximum unit of the logit is greater than or equal to $0$.

\paragraph{Results}
The result is given in Fig.~\ref{fig:compatibility_other_confs_supp}.

\subsubsection{Ablation on the components of NNGuide}

\paragraph{Descriptions}
We describe in detail the detection methods used in the ablation study of NNGuide components (\ie Tab.~\ref{table:ablation_components}). Let $f$ be a neural network that outputs the classification logit, and $\phi$ the feature extractor inside the network. Let $S_{base}$ be the score function that computes the base confidence (\ie negative Energy) score given an input. Let $\{\mathbf{x}_1, \dots, \mathbf{x}_n\}$ be the ID bank set, and $\{\mathbf{z}_1, \dots, \mathbf{z}_n\}$ the corresponding features (\ie $\mathbf{z}_i = \phi(\mathbf{x}_i)$) with $s_i = S_{base}(\mathbf{x}_i)$.
Let $\mathbf{x}$ be a test input and $\mathbf{z}$ its extracted feature.  

\begin{itemize}
\item 
\textbf{KNN}: The KNN score is computed by
\begin{equation}
S_{KNN}(\mathbf{x}) = \simop ( \mathbf{z}_{(k)}, \mathbf{z})
\end{equation}
where the ordered indices $(i)$ satisfy
\begin{equation}
\simop ( \mathbf{z}_{(1)}, \mathbf{z}) \geq \cdots \geq  \simop ( \mathbf{z}_{(n)}, \mathbf{z}).
\end{equation}
\item
\textbf{KNN with average similarity}: The score of KNN with the average similarity slightly modifies the original KNN by
\begin{equation}
\label{eq:knn_avg}
S_{KNN-avg}(\mathbf{x}) = \frac{1}{k} \sum_{i=1}^k \simop ( \mathbf{z}_{(i)}, \mathbf{z}).
\end{equation}
\item
\textbf{Energy}: The (negative) Energy score is computed by
\begin{equation}
S_{base}(\mathbf{x}) = \log \sum_{c=1}^K \exp f_c(\mathbf{x})
\end{equation}
\item
\textbf{Naive fusion}: The basic fusion of KNN and the base confidence is performed by either of $S_{KNN}(x) \cdot S_{base}(x)$, $c_{sum} S_{KNN}(x) + S_{base}(x)$, $c_{\max} S_{KNN}(x) + S_{base}(x)$, and $c_{\min} S_{KNN}(x) + S_{base}(x)$. Note that the coefficients $c_{sum}$, $c_{\max}$, and $c_{\min}$ are the coefficients to min-max normalize the scores to manually balance the importance of the two scores. Note that min-max normalization is done based on the bank set.
\item
\textbf{Mahalanobis guidance}: 
In this case, the guidance term $G(x)$ is given as the Mahalanobis score
\begin{equation}
G(\mathbf{x}) = \exp(- \min_{c=1}^K (\mathbf{x} - \boldsymbol{\mu}_c)^T\Sigma^{-1}(\mathbf{x} - \boldsymbol{\mu}_c)/(2\cdot d))
\end{equation}
where $\boldsymbol{\mu}_c$ is the mean of $c$-th class features, $\Sigma$ is the shared covariance matirx, and $d$ is the dimension of feature. (Without the 
\item
\textbf{Guidance-term only}: In this case, the score function in utilization is 
\begin{equation}
S_{guide-only}(\mathbf{x}) = G(\mathbf{x})
\end{equation}
where $G(\mathbf{x})$ is the nearest-neighbor guidance term given in \eqref{eq:sim_ensemble}.
\item
\textbf{Without confidence scaling}: In this case, the detection score function is computed by without the scaling term in the nearest neighbor similarities. Namely,
\begin{equation}
S_{w/o-scale}(\mathbf{x}) = S_{base}(\mathbf{x}) \cdot G_{w/o-scale}(\mathbf{x})
\end{equation}
where $G_{w/o-scale}(\mathbf{x}) = S_{KNN-avg}(\mathbf{x})$ as given in \eqref{eq:knn_avg}.
\end{itemize}

\paragraph{Results}
We perform ablation study on the NNGuide components on both ImageNet-1k and ImageNet-1k-V2 across ResNet-18 and RegNet. The results are given in Tab.~\ref{table:ablation_components_suppv1} and \ref{table:ablation_components_suppv2}.

\begin{figure*}[t]
\centering
\includegraphics[width=.9995\linewidth]{./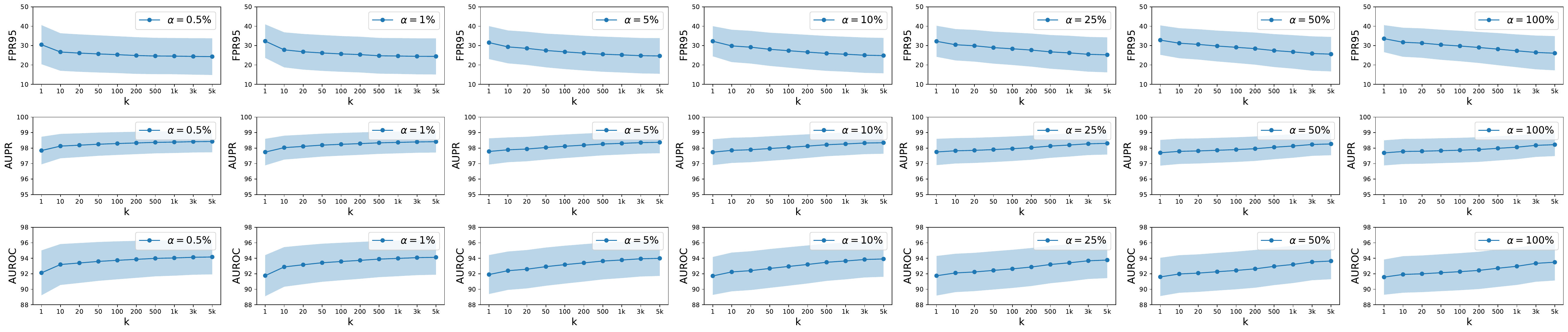}
\caption{
The performance of NNGuide across the number $k$ of nearest neighbors and the sampling ratio $\alpha$.
}
\label{fig:hyperparameters_supp}
\end{figure*}

\subsubsection{Analysis of the hyperparameters}
Fig.~\ref{fig:hyperparameters_supp} indicates the full analysis of the nearest neighbor hyperparameters $\alpha$ and $k$.

\begin{figure*}[t]
\centering
\includegraphics[width=.995\linewidth]{./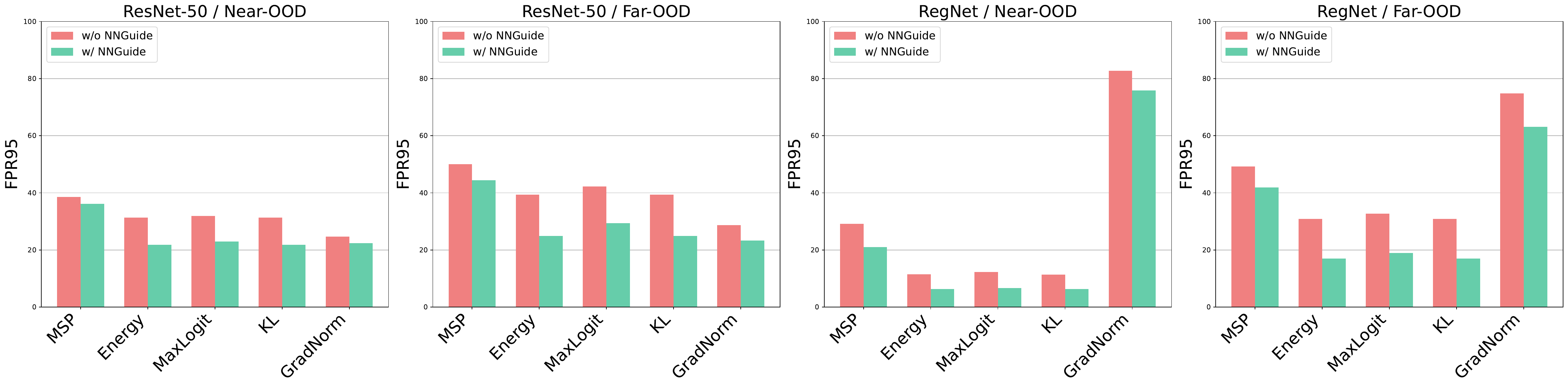}
\caption{
The improvement by our proposed nearest neighbor guidance against far-OOD data (Textures) and near-OOD data (iNaturalist and OpenImage-O). The ID data is ImageNet-1k and the model is ResNet-50.
}
\label{fig:far_vs_near}
\end{figure*}

\subsubsection{Dataset analysis: near-OOD vs far-OOD}
Fig.~\ref{fig:far_vs_near} indicates that the nearest neighbor guidance improves the base confidence score against both far-OOD data (\ie Textures) and near-OOD data (\ie iNaturalist and OpenImage-O). Notably, the improvement is more significant against the far-OOD data. This is expected by Theorem \ref{thm:theory}. Overall, NNGuide achieves balanced robustness against both far-OOD and near-OOD. 

\section{Additional Experiments}

\subsection{Experiments with CIDER}

\begin{table}[t]
\centering
\resizebox{.95\linewidth}{!}{
\begin{tabular}{lllllllllllll}
\toprule
~ & \multicolumn{2}{c}{SVHN} & \multicolumn{2}{c}{Places365} & \multicolumn{2}{c}{iSUN}  & \multicolumn{2}{c}{Texture}  & \multicolumn{2}{c}{LSUN}  & \multicolumn{2}{c}{AVG}  \\ 
~ & FPR95 & AUROC & FPR95 & AUROC & FPR95 & AUROC & FPR95 & AUROC & FPR95 & AUROC & FPR95 & AUROC \\ 
\hline
CIDER & 21.42 & 94.87 & 82.16 & 67.32 & 63.75 & 82.25 & 32.96 & 92.49 & 9.38 & 98.13 & 41.93 & 87.01 \\ 
CIDER-NNGuide & 21.52 & 94.97 & 81.95 & 67.40 & 58.71 & 84.65 & 30.18 & 93.24 & 9.75 & 98.00 & \textbf{40.42} & \textbf{87.65} \\ 
\bottomrule
\end{tabular}
}
\caption{
The results of NNGuide with CIDER on CIFAR-100 (ID).
}
\label{table:cider}
\end{table}

To analyze the compatibility of NNGuide with the state-of-the-art trainig method CIDER \cite{ming2022exploit} that is particularly effective for the KNN score, we implemented NNGuide based on the official Github repository of CIDER\footnote{\url{https://github.com/deeplearning-wisc/cider}} using the provided model weights therefrom. Tab.~\ref{table:cider} shows the performance of NNGuide with CIDER on CIFAR-100, indicating that NNGuide is compatible to CIDER. We note that, to compute base energy score for NNGuide, we defined the classifier weights by the class-wise means, and used $k=100$ with $\alpha = 1\%$.

\subsection{Experiments on ImageNet-O with ViM}

\begin{table}[t]
\centering
\resizebox{.65\linewidth}{!}{
\begin{tabular}{l ccccc}
\hline
OOD data & KNN & Energy & ViM & NNGuide-Energy & NNGuide-ViM \\
\hline 
ImageNet-O & 51.90 / 89.16 & 41.30 / 90.46 & 36.75 / 92.55 & 41.65 / 91.04 & \textbf{33.10} / \textbf{92.96} \\
\hline
\end{tabular}
}
\caption{
The result of NNGuide with ViM on ImageNet-O in (FPR95$\downarrow$ / AUROC$\uparrow$). Here, ID is the ImageNet-1k, and the backbone is ViT-B-P16-384.
}
\label{table:imageneto}
\end{table}

To evaluate on ImageNet-O \cite{hendrycks2021natural}, we applied NNGuide on ViM utilizing the implementation from the official Github repository of ViM\footnote{\url{https://github.com/vim/vim}}. 
We used the ViT-B-P16-384 backbone from \texttt{mmcls} as it achieves the SOTA with ViM.
Using the same provided ImageNet-1k train feature bankset with $k{=}100$ and the sampling ratio $\alpha=10\%$, we obtained the result in Tab.~\ref{table:imageneto}, where `NNGuide-Energy' indicates the application of NNGuide with the base score being the negative energy (\ie $S_{base}(\mathbf{x}) = - \text{Energy}(\mathbf{x})$), while `NNGuide-ViM' denotes NNGuide with the base score being ViM (\ie $S_{base}(\mathbf{x}) = - \text{ViM}(\mathbf{x})$). As the ImageNet-O is specifically designed to weaken the classifier-based confidence, both the vanilla Energy and our NNGuide-Energy perform poorly for OOD detection. 
For the ImageNet-O dataset, the integration of ViM with the ViT architecture proves to be exceptionally effective. The adversarial nature of ImageNet-O primarily targets classifier confidence and convolutional networks. This makes the combination of ViT and ViM especially robust against ImageNet-O as ViT is non-convolutional and ViM does not rely solely on raw confidence. Accordingly, the result in Tab.~\ref{table:imageneto} indicates that NNGuide's effectiveness is significantly enhanced when used in conjunction with ViM.

\subsection{Experiments on CIFAR-10}

\begin{table}[t]
\centering
\resizebox{.95\linewidth}{!}{
\begin{tabular}{l | ccc | ccccc}
\toprule
~ & \multicolumn{3}{c}{near-OOD}  & \multicolumn{5}{c}{far-OOD} \\ 
Method & CIFAR-100 & TIN & Average (near-OOD) & MNIST & SVHN & Texture & Places365 & Average (far-OOD) \\ 
\hline
Energy & 51.46 / 86.15 & 45.02 / 88.58 & 48.24 / 87.36  & 44.50 / 90.59 & 44.94 / 88.39 & 48.32 / 86.85 & 41.88 / 89.60 & 44.91 / 88.86 \\ 
KNN & 52.49 / 89.55 & 46.66 / 91.41  & 49.58 / 90.48 & 50.08 / 91.63 & 33.32 / 95.13 & 46.01 / 92.77  & 43.78 / 91.82  & 43.30 / 92.83 \\ 
NNGuide & 51.54 / 86.64 & 43.99 / 89.07 & 47.77 / 87.86 & 47.43 / 89.82 & 43.64 / 89.62 & 46.91 / 88.44 & 40.62 / 90.39 & 44.65 / 89.57 \\
\bottomrule 
\end{tabular}
}
\caption{The result of NNGuide on CIFAR-10 (ID) in (FPR95$\downarrow$ / AUROC$\uparrow$)}
\label{table:cifar10}
\end{table}

NNGuide is particularly effective for large-scale data. For a small-scale data with few classes such as CIFAR-10, ID features are already well separated class-wise. Hence KNN can already well detect near-OOD in a fine-grained manner without NNGuide, indicated by its performance similar to Energy. 
The result shown in Tab.~\ref{table:cifar10} (attained by using the official OpenOOD Github repository\footnote{\url{https://github.com/Jingkang50/OpenOOD}}) verifies our claim.
Note that the above result is obtained with $\alpha = 10\%$ and $k=10$.

\clearpage
\twocolumn

\end{document}